\newcommand{\loss}{\ell}
\newcommand{\hloss}{\wh{\ell}}
\newcommand{\tloss}{\wt{\ell}}
\newcommand{\real}{\mathbb{R}}
\newcommand{\Sw}{\mathcal{S}}
\newcommand{\II}[1]{\mathbbm{1}_{\left\{#1\right\}}}
\newcommand{\PP}[1]{\mathbb{P}\left[#1\right]}
\newcommand{\EE}[1]{\mathbb{E}\left[#1\right]}
\newcommand{\PPc}[2]{\mathbb{P}\left[#1\left|#2\right.\right]}
\newcommand{\PPcc}[2]{\mathbb{P}\left[\left.#1\right|#2\right]}
\newcommand{\EEc}[2]{\mathbb{E}\left[#1\left|#2\right.\right]}
\newcommand{\EEcc}[2]{\mathbb{E}\left[\left.#1\right|#2\right]}
\newcommand{\ev}[1]{\left\{#1\right\}}
\newcommand{\pa}[1]{\left(#1\right)}
\newcommand{\bpa}[1]{\bigl(#1\bigr)}
\newcommand{\Bpa}[1]{\Bigl(#1\Bigr)}
\newcommand{\F}{\mathcal{F}}
\newcommand{\tp}{\wt{p}}
\newcommand{\tq}{\wt{q}}
\renewcommand{\th}{\ensuremath{^{\mathrm{th}}}}
\def\argmin{\mathop{\rm arg\, min}}
\newcommand{\hR}{\wh{R}}
\newcommand{\tL}{\wt{L}}
\newcommand{\p}{p}
\newcommand{\transpose}{^\mathsf{\scriptscriptstyle T}}
\newcommand{\bX}{\bm{X}}
\newcommand{\bV}{\bm{V}}
\newcommand{\br}{\bm{r}}
\newcommand{\be}{\bm{e}}
\newcommand{\bK}{\bm{K}}
\newcommand{\bu}{\bm{u}}
\newcommand{\bU}{\bm{U}}
\newcommand{\bv}{\bm{v}}
\newcommand{\bz}{\bm{z}}
\newcommand{\bloss}{\bm\ell}
\newcommand{\bL}{\bm{L}}
\newcommand{\tbZ}{\widetilde{\bZ}}
\newcommand{\tV}{\widetilde{V}}
\newcommand{\tbV}{\widetilde{\bV}}
\newcommand{\hbl}{\wh{\bloss}}
\newcommand{\hbL}{\wh{\bL}}
\newcommand{\tbL}{\wt{\bL}}
\newcommand{\tbl}{\wt{\bloss}}
\newcommand{\bp}{\bm{p}}
\newcommand{\bZ}{\bm{Z}}
\newcommand{\var}{{\rm Var}}
\newcommand{\varcc}[2]{\var\left[\left.#1\right|#2\right]}
\newcommand{\wh}{\widehat}
\newcommand{\wt}{\widetilde}
\newcommand{\fpl}{\texttt{FPL}\xspace}
\newcommand{\fplrw}{\texttt{FPL+GR}\xspace}
\newcommand{\fplrwp}{\texttt{FPL+GR.P}\xspace}
\newcommand{\gr}{\texttt{GR}\xspace}
\newcommand{\rw}{\texttt{GR}\xspace}
\newcommand{\exph}{\texttt{Exp3}\xspace}
\newcommand{\green}{\texttt{Green}\xspace}
\newcommand{\hedge}{\texttt{Hedge}\xspace}
\newcommand{\ewa}{\texttt{EWA}\xspace}
\newcommand{\osmd}{\texttt{OSMD}\xspace}
\newcommand{\INF}{\texttt{INF}\xspace}
\newcommand{\norm}[2]{\left\|#1\right\|_{#2}}
\newcommand{\infnorm}[1]{\norm{#1}{\infty}}
\newcommand{\onenorm}[1]{\norm{#1}{1}}
\def\blfootnote{\gdef\@thefnmark{}\@footnotetext}
\begin{document}

\title{Importance Weighting Without Importance Weights: 
\\An Efficient Algorithm for Combinatorial Semi-Bandits}

\author{\name Gergely Neu \email gergely.neu@gmail.com \\
       \addr Universitat Pompeu Fabra
       \\ Roc Boronat 138, 08018, Barcelona, Spain
       \AND
       \name G\'abor Bart\'ok \email bartok@google.com \\
       \addr Google Z\"urich
       \\ Brandschenkestrasse 100, 8002, Z\"urich, Switzerland}

\editor{Manfred Warmuth}

\maketitle

\begin{abstract}
We propose a sample-efficient alternative for importance weighting for situations where one only has
sample access to the probability distribution that generates the observations. Our new method, called Geometric Resampling (\gr), is 
described and analyzed in the context of online combinatorial optimization under semi-bandit
feedback, where a learner sequentially selects its actions from a combinatorial decision set so as to
minimize its cumulative loss. 
In particular, we show that the well-known Follow-the-Perturbed-Leader (\fpl) prediction method coupled with Geometric Resampling yields the 
first computationally efficient reduction from offline to online optimization in this setting. We
provide a thorough theoretical analysis for the resulting algorithm, showing that its performance is on par with
previous, inefficient solutions. Our main contribution is showing that, despite the relatively large variance induced by
the \rw procedure, our performance guarantees hold with high probability rather than only in expectation. As a side
result, we also improve the best known regret  bounds for \fpl in online combinatorial optimization with full feedback,
closing the perceived performance gap between \fpl and exponential weights in this setting.
\blfootnote{A preliminary version of this paper was published as \citet{NeuBartok13}. Parts of this work were completed while 
Gergely Neu was with the SequeL team at INRIA Lille -- Nord Europe, France and G\'abor Bart\'ok was with the Department of Computer 
Science at ETH Z\"urich.}
\end{abstract}

\begin{keywords}
  online learning, combinatorial optimization, bandit problems, semi-bandit feedback, follow the perturbed leader, importance weighting
\end{keywords}

\section{Introduction}

Importance weighting is a crucially important tool used in many areas of machine learning, and specifically online
learning with partial feedback.  While most work assumes that importance weights are readily available or can be
computed with little effort during runtime, this is often not the case in many practical settings, even when one has
cheap sample access to the distribution generating the observations. Among other cases, such situations may arise when
observations are generated by complex hierarchical sampling schemes, probabilistic programs, or, more
generally, black-box generative models. In this paper, we propose a simple and efficient sampling scheme called
\emph{Geometric Resampling} (\gr) to compute reliable estimates of importance weights \emph{using only sample access}. 

Our main motivation is studying a specific online learning algorithm whose practical applicability in partial-feedback
settings had long been hindered by the problem outlined above. Specifically, we consider the well-known
\emph{Follow-the-Perturbed-Leader} (\fpl) prediction method that maintains implicit sampling distributions that
usually cannot be expressed in closed form. In this paper, we endow \fpl with our Geometric Resampling scheme
to construct the first known computationally efficient reduction from offline to online combinatorial optimization
under an important partial-information scheme known as \emph{semi-bandit feedback}. In the rest of this section, we
describe our precise setting, present related work and outline our main results.

\subsection{Online Combinatorial Optimization}
We consider a special case of online linear optimization known as online combinatorial optimization (see
Figure~\ref{fig:protocol}). In every round $t=1,2,\dots,T$ of this sequential decision problem, the learner chooses
an \emph{action} $\bV_t$ from the finite action set $\Sw\subseteq\ev{0,1}^d$, where $\left\|\bv\right\|_1\le m$ holds
for all $\bv\in\Sw$. At the same time, the environment fixes a loss vector $\bloss_t\in[0,1]^d$ and the learner suffers
loss $\bV_t\transpose\bloss_t$. The goal of the learner is to minimize the cumulative loss
$\sum_{t=1}^T\bV_t\transpose\bloss_t$.
As usual in the literature of online optimization (\citealp{CBLu06:book}), we measure the performance of the learner
in terms of the \emph{regret} defined as
\begin{equation}\label{eq:regret}
R_T= \max_{\bv\in\Sw} \sum_{t=1}^T\left(\bV_t - \bv\right)\transpose\bloss_t = \sum_{t=1}^T\bV_t\transpose\bloss_t -
\min_{\bv\in\Sw} \sum_{t=1}^T\bv\transpose\bloss_t\,,
\end{equation}
that is, the gap between the total loss of the learning algorithm and the best fixed decision in hindsight.
In the current paper, we focus on the case of \emph{non-oblivious} (or \emph{adaptive}) environments, where we allow the
loss vector $\bloss_t$ to depend on the previous decisions $\bV_1,\dots,\bV_{t-1}$ in an arbitrary fashion. Since it is
well-known that no deterministic algorithm can achieve sublinear regret under such weak assumptions, we will consider
learning algorithms that choose their decisions in a randomized way. For such learners, another performance measure
that we will study is the \emph{expected regret} defined as
\[
 \hR_T= \max_{\bv\in\Sw} \sum_{t=1}^T\EE{\left(\bV_t - \bv\right)\transpose\bloss_t} =
\EE{\sum_{t=1}^T\bV_t\transpose\bloss_t} -
\min_{\bv\in\Sw} \EE{\sum_{t=1}^T\bv\transpose\bloss_t}.
\]
\begin{figure}[t]
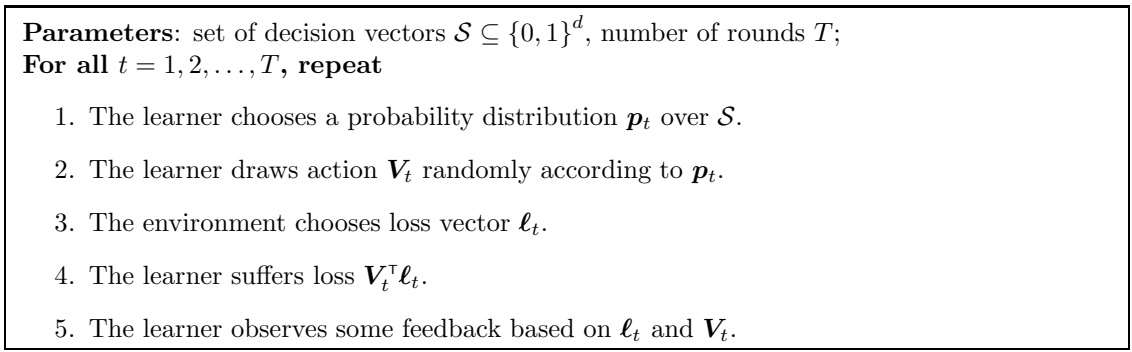

\centering
\fbox{
\begin{minipage}{.95\textwidth}
{\bfseries Parameters}: set of decision vectors $\Sw \subseteq\ev{0,1}^d$,
number of rounds $T$; \\
{\bfseries For all $t=1,2,\dots,T$, repeat}
\begin{enumerate}
\item The learner chooses a probability distribution $\bp_t$ over $\Sw$.
\item The learner draws action $\bV_t$ randomly according to $\bp_t$.
\item The environment chooses loss vector $\bloss_{t}$.
\item The learner suffers loss $\bV_t\transpose\bloss_{t}$.
\item The learner observes some feedback based on $\bloss_t$ and $\bV_t$.
\end{enumerate}
\end{minipage}
}
\caption{The protocol of online combinatorial optimization.}
\label{fig:protocol}
\end{figure}

The framework described above is general enough to accommodate a number of interesting problem instances such as path
planning, ranking and matching problems, finding minimum-weight spanning trees and cut sets. Accordingly, different
versions of this general learning problem have drawn considerable attention in the past few years. These versions differ
in the amount of information made available to the learner after each round $t$. In the simplest setting, called the
\emph{full-information} setting, it is assumed that the learner gets to observe the loss vector $\bloss_t$ regardless of
the choice of $\bV_t$. As this assumption does not hold for many practical applications, it is more interesting
to study the problem under \emph{partial-information} constraints, meaning that the learner only gets some limited
feedback based on its own decision. 
In the current paper, we focus on a more realistic partial-information scheme known as \emph{semi-bandit feedback}
\citep*{audibert13regret} where the learner only observes the components $\loss_{t,i}$ of the loss vector for which
$V_{t,i}=1$, that is, the losses associated with the components selected by the learner.\footnote{Here, $V_{t,i}$ and
$\loss_{t,i}$ are the $i\th$ components of the vectors $\bV_t$ and $\bloss_t$, respectively.}

\subsection{Related Work}
The most well-known instance of our problem is the  \emph{multi-armed bandit} problem considered in the
seminal paper of \citet*{auer2002bandit}: in each round of this problem, the learner has to select one of $N$
\emph{arms} and minimize regret against the best fixed arm while only observing the losses of the chosen arms. In our
framework, this setting corresponds to setting $d=N$ and $m=1$.
Among other contributions concerning this problem, \citeauthor{auer2002bandit}~propose an algorithm called
\exph\,(Exploration and Exploitation using Exponential weights) based on constructing loss estimates $\hloss_{t,i}$ for
each component of the loss vector and playing arm $i$ with probability proportional to $\exp(-\eta \sum_{s=1}^{t-1}
\hloss_{s,i})$ at time $t$, where $\eta>0$ is a parameter of the algorithm, usually called the learning rate\footnote{In
fact, \citeauthor{auer2002bandit}~mix the resulting distribution with a uniform distribution over the arms with
probability $\eta N$. However, this modification is not  needed when one is concerned with the total expected regret,
see, e.g., \citet[Section~3.1]{bubeck12survey}.}. This algorithm is essentially a variant of the Exponentially Weighted
Average (\ewa) forecaster (a variant of weighted majority algorithm of \citealp{LW94}, and aggregating strategies of
\citealp{Vov90}, also known as \hedge by \citealp{FrSc97}). 
Besides proving that the \emph{expected} regret of \exph is $O\bpa{\sqrt{NT\log N}}$,
\citeauthor{auer2002bandit}~also provide a general lower bound of $\Omega\bpa{\sqrt{NT}}$ on the regret of any learning
algorithm on this particular problem. This lower bound was later matched by a variant of the Implicitly Normalized
Forecaster (\INF) of \citet{AB10} by using the same loss estimates in a more refined way. \citeauthor{AB10} also show
bounds of $O\bpa{\sqrt{NT/\log N}\log(N/\delta)}$ on the regret that hold with probability at least $1-\delta$,
uniformly for any $\delta>0$.

The most popular example of online learning problems with actual combinatorial structure is the shortest path problem
first considered by \citet{TW03} in the full information scheme.
The same problem was considered by \citet*{gyorgy07sp}, who proposed an algorithm that works with semi-bandit
information. Since then, we have come a long way in understanding the ``price of information'' in online combinatorial
optimization---see \citet*{audibert13regret} for a complete overview of results concerning all of the information
schemes considered in the current paper. The first algorithm directly targeting general online combinatorial
optimization problems is due to \citet*{KWK10}: their method named \texttt{Component Hedge} guarantees an optimal regret
of
$O\bpa{m\sqrt{T\log (d/m)}}$ in the full
information setting. As later shown by \citet*{audibert13regret}, this algorithm is an instance of a more general algorithm class known as 
Online Stochastic Mirror Descent (\osmd). Taking the idea one step further, \citet*{audibert13regret} also show that
\osmd-based methods can also be used for proving expected regret bounds of $O\bpa{\sqrt{mdT}}$ for the semi-bandit
setting, which is also shown to coincide with the minimax regret in this setting. 
For completeness, we note that the \ewa forecaster is known to attain an
expected regret of $O\bpa{m^{3/2}\sqrt{T\log (d/m)}}$ in the full information case and $O\bpa{m\sqrt{dT\log (d/m)}}$ in
the
semi-bandit case.

While the results outlined above might suggest that there is absolutely no work left to be done in the full information
and semi-bandit schemes, we get a different picture if we restrict our attention to \emph{computationally efficient}
algorithms. First, note that  methods based on exponential weighting of each decision vector can only be efficiently
implemented
for a handful of decision sets $\Sw$---see \citet{KWK10} and \citet{CL12} for some examples. Furthermore, as noted by
\citet{audibert13regret}, \osmd-type methods can be efficiently implemented by convex programming if the convex hull
of the decision set can be described by a polynomial number of constraints. Details of such an efficient implementation
are worked out by \citet*{suehiro12submodular}, whose algorithm runs in $O(d^6)$ time, which can still be prohibitive in
practical applications. While \citet{KWK10} list some further examples where \osmd can be implemented efficiently, we
conclude that there is no general efficient algorithm with near-optimal performance guarantees for learning in
combinatorial semi-bandits.

The Follow-the-Perturbed-Leader (\fpl) prediction method (first proposed by \citealp{Han57} and later rediscovered by
\citealp{KV05}) offers a computationally efficient solution for the online combinatorial optimization problem given
that the \emph{static} combinatorial optimization problem $\min_{\bv\in\Sw} \bv\transpose\bloss$ admits computationally
efficient solutions for any $\bloss\in\real^{d}$. The idea underlying \fpl is very simple: in every round $t$, the
learner draws some random perturbations $\bZ_t\in\real^d$ and selects the action that minimizes the perturbed total
losses:
\[
\bV_t = \argmin_{\bv\in\Sw} \bv\transpose \left(\sum_{s=1}^{t-1}\bloss_s - \bZ_t\right).
\]
Despite its conceptual simplicity and computational efficiency, \fpl have been relatively overlooked until very
recently, due to two main reasons:
\begin{itemize}
\item The best known bound for \fpl in the full information setting is $O\bpa{m\sqrt{dT}}$, which is worse than the
bounds for both \ewa and \osmd that scale only logarithmically with $d$.
\item Considering bandit information, no efficient \fpl-style algorithm is known to achieve a regret of
$O\bpa{\sqrt{T}}$.
On one hand, it is relatively straightforward to prove $O\bpa{T^{2/3}}$ bounds on the expected regret for an efficient
\fpl-variant (see, e.g., \citealp{AweKlein04}  and \citealp{McMaBlu04}). \citet{Pol05} proved
bounds of $O\bpa{\sqrt{NT\log N}}$ in the $N$-armed bandit setting, however, the proposed algorithm requires
$O\bpa{T^2}$ numerical operations per round. 
\end{itemize}
The main obstacle for constructing a computationally efficient \fpl-variant that works with partial information is
precisely the lack of closed-form expressions for importance weights. In the current paper, we address the above two
issues and show that an efficient \fpl-based algorithm using independent exponentially distributed perturbations can achieve as good 
performance guarantees as \ewa in online combinatorial optimization.

Our work contributes to a new wave of positive results concerning \fpl. Besides the reservations towards \fpl
mentioned above, the reputation of \fpl has been also suffering from the fact that the nature of regularization arising
from perturbations is not as well-understood as the explicit regularization schemes underlying \osmd or \ewa. Very
recently, \citet{ALTS14} have shown that \fpl implements a form of strongly convex regularization over the convex hull
of the decision space. Furthermore, \citet{rakhlin12rr} showed that \fpl run with a specific perturbation scheme can be
regarded as a relaxation of the minimax algorithm. Another recently initiated line of work shows that intuitive
\emph{parameter-free} variants of \fpl can achieve excellent performance in full-information settings
(\citealp{devroye13rwalk} and \citealp{EWK14}).

\subsection{Our Results}
In this paper, we propose a loss-estimation scheme called Geometric Resampling to efficiently
compute importance weights for the observed components of the loss vector. Building on this technique and the \fpl
principle, resulting in \emph{an efficient algorithm for regret minimization under semi-bandit feedback}. Besides this
contribution, our techniques also enable us to improve the best known regret bounds of \fpl in the full information
case. We prove the following results concerning variants of our algorithm:
\begin{itemize}
 \item a bound of $O\bigl(m\sqrt{dT\log(d/m)}\bigr)$ on the expected regret under semi-bandit feedback
(Theorem~\ref{thm:bandit_exp}),
 \item a bound of $O\bigl(m\sqrt{dT\log(d/m)} + \sqrt{mdT}\log(1/\delta)\bigr)$ on the regret that holds with
probability at least $1-\delta$, uniformly for all $\delta\in(0,1)$ under semi-bandit feedback
(Theorem~\ref{thm:highprob}),
 \item a bound of $O\bigl(m^{3/2}\sqrt{T\log(d/m)}\bigr)$ on the expected regret under full information
(Theorem~\ref{thm:fullinfo}).
\end{itemize}
We also show that both of our semi-bandit algorithms access the optimization oracle $O(dT)$ times over $T$ rounds with
high probability, increasing the running time only by a factor of $d$ compared to the full-information variant. Notably,
our results close the gaps between the performance bounds of \fpl and \ewa under both full information and semi-bandit
feedback. Table~\ref{tab:table} puts our newly proven regret bounds into context.

\begin{table}[h]
\begin{center}
\begin{tabular}{l|l|l|l}
 & \fpl & \ewa & \osmd\\
 \hline
Full info regret bound & $\mathbf{m^{3/2}\sqrt{T\, log\frac dm}}$ & $m^{3/2}\sqrt{T\log\frac dm}$ & 
$m\sqrt{T\log\frac dm}$ \\
Semi-bandit regret bound & $\mathbf{m\sqrt{dT\, log\frac dm}}$ & $m\sqrt{dT\log\frac dm}$ & 
$\sqrt{mdT}$ \\
Computationally efficient? & always & sometimes & sometimes
\end{tabular}
\end{center}
\caption{Upper bounds on the regret of various algorithms for online combinatorial optimization, up to constant factors. The third row 
roughly describes the computational efficiency of each algorithm---see the text for details. New results are presented in 
boldface.}\label{tab:table}
\end{table}

\section{Geometric Resampling}\label{sec:rw}
In this section, we introduce the main idea underlying Geometric Resampling in the specific context of $N$-armed
bandits where $d=N$, $m=1$ and the learner has access to the basis vectors $\ev{\be_i}_{i=1}^d$ as its decision set
$\Sw$. In this setting, components of the decision vector are referred
to as \emph{arms}.
For ease of notation, define $I_t$ as the unique arm such that $V_{t,I_t} = 1$ and $\F_{t-1}$ as the sigma-algebra
induced by the learner's actions and observations up to the end of round $t-1$. Using this notation, we define
 $p_{t,i} = \PPcc{I_t=i}{\F_{t-1}}$.

Most bandit algorithms rely on feeding some loss estimates to a sequential prediction algorithm.
It is commonplace to consider \emph{importance-weighted} loss estimates of the form
\begin{equation}\label{eq:oldest}
\hloss^*_{t,i} = \frac{\II{I_t=i}}{\p_{t,i}} \loss_{t,i}
\end{equation}
for all $t,i$ such that $p_{t,i}>0$. It is straightforward to show that $\hloss^*_{t,i}$ is an unbiased estimate of
the loss $\loss_{t,i}$ for all such $t,i$. Otherwise, when $p_{t,i}=0$, we set $\hloss_{t,i}^*=0$, which gives
$\EEcc{\hloss_{t,i}^*}{\F_{t-1}} = 0 \le \loss_{t,i}$.

To our knowledge, all existing bandit algorithms operating in the non-stochastic setting utilize some version of the
importance-weighted loss estimates described above. 
This is a very natural choice for algorithms that operate by first computing the probabilities $p_{t,i}$ and then
sampling $I_t$ from the resulting distributions. While many algorithms fall into this class (including the \exph
algorithm of \citet{auer2002bandit}, the \green algorithm of \citet{allenberg06hannan} and the \INF algorithm of
\citet{AB10}, one can think of many other algorithms where the distribution $\bp_t$ is specified implicitly and thus
importance weights are not readily available. Arguably, \fpl is the most important online prediction algorithm that
operates with implicit distributions that are notoriously difficult to compute in closed form. To overcome this
difficulty, we propose a different loss estimate that can be efficiently computed \emph{even when $\bp_{t}$ is not
available for the learner}.

Our estimation procedure dubbed Geometric Resampling (\gr) is based on the simple observation that, even though
$p_{t,I_t}$ might not be computable in closed form, one can simply generate a geometric random variable with
expectation $1/p_{t,I_t}$ by repeated sampling from $\bp_t$. Specifically, we propose the following procedure
to be executed in round $t$:

\vspace{.25cm}
\makebox[\textwidth][c]{
\fbox{
\begin{minipage}{.6\textwidth}
\textbf{Geometric Resampling for multi-armed bandits}
\vspace{.1cm}
\hrule
\begin{enumerate}
\item The learner draws $I_t\sim \bp_t$.
\item For $k=1,2,\dots$
\begin{enumerate}
\item Draw $I'_t(k) \sim \bp_t$.
\item If $I'_t(k) = I_t$, break.
\end{enumerate}
\item Let $K_t = k$.
\end{enumerate}
\end{minipage}
}
}
\vspace{.25cm}

\noindent Observe that $K_t$ generated this way is a geometrically distributed random variable given $I_t$ and
$\F_{t-1}$. Consequently, we
have $\EEc{K_t}{\F_{t-1},I_t} = 1/\p_{t,I_t}$.
We use this property to construct the estimates
\begin{equation}\label{eq:newest}
\hloss_{t,i} = K_t \II{I_t=i} \loss_{t,i} 
\end{equation}
for all arms $i$.
We can easily show that the above estimate is unbiased whenever $p_{t,i}>0$:
\[
\begin{split}
\EEcc{\hloss_{t,i}}{\F_{t-1}} &= \sum_{j} p_{t,j} \EEcc{\hloss_{t,i}}{\F_{t-1},I_t=j}
\\
&= p_{t,i} \EEc{\loss_{t,i} K_t}{\F_{t-1},I_t=i}
\\
&= p_{t,i} \loss_{t,i} \EEc{K_t}{\F_{t-1},I_t=i}
\\
&= \loss_{t,i}.
\end{split}
\]
Notice that the above procedure produces $\hloss_{t,i} = 0$ almost surely whenever $p_{t,i} = 0$, giving
$\EEcc{\hloss_{t,i}}{\F_{t-1}} = 0$ for such $t,i$.

One practical concern with the above sampling procedure is that its worst-case running time is unbounded: while the
expected number of necessary samples $K_t$ is clearly $N$, the actual number of samples might be much larger. 
In the next section, we offer a remedy to this problem, as well as generalize the approach to work in the
combinatorial semi-bandit case.

\section{An Efficient Algorithm for Combinatorial Semi-Bandits}\label{sec:alg}
In this section, we present our main result: an efficient reduction from offline to online combinatorial optimization
under semi-bandit feedback. The most critical element in our technique is extending the Geometric Resampling idea to
the case of combinatorial action sets. For defining the procedure, let us assume that we are running a randomized
algorithm mapping histories to probability distributions over the action set $\Sw$: letting $\F_{t-1}$ denote the
sigma-algebra induced by the history of interaction between the learner and the environment, the algorithm
picks action $\bv\in\Sw$ with probability $p_{t}(\bv) = \PPc{\bV_t = \bv}{\F_{t-1}}$. Also introducing $q_{t,i} =
\EEc{V_{t,i}}{\F_{t-1}}$, we can define the counterpart of the standard importance-weighted loss estimates
of Equation~\ref{eq:oldest} as the vector $\hbl^*_t$ with components
\begin{equation}\label{eq:combest_old}
 \hloss_{t,i}^* = \frac{V_{t,i}}{q_{t,i}} \loss_{t,i}.
\end{equation}
Again, the problem with these estimates is that for many algorithms of practical interest, the importance weights
$q_{t,i}$ cannot be computed in closed form. We now extend the Geometric Resampling procedure defined in the previous
section to estimate the importance weights in an efficient manner. One adjustment we make to the procedure presented in the previous 
section is capping off the number of samples at some finite $M>0$. While this capping obviously introduces some bias, we will show later 
that for appropriate values of $M$, this bias does not hurt the performance of the overall learning algorithm too much. Thus, we define the 
Geometric Resampling procedure for combinatorial semi-bandits as follows:

\vspace{.25cm}
\makebox[\textwidth][c]{
\fbox{
\begin{minipage}{.7\textwidth}
\textbf{Geometric Resampling for combinatorial semi-bandits}
\vspace{.1cm}
\hrule
\begin{enumerate}
\item The learner draws $\bV_t\sim \bp_t$.
\item For $k=1,2,\dots,M$, draw $\bV'_t(k) \sim \bp_t$.
\item For $i=1,2,\dots,d$,
\[
K_{t,i} = \min\bigl(\ev{k: V_{t,i}'(k) = 1}\cup\ev{M}\bigr).
\]
\end{enumerate}
\end{minipage}
}
}
\vspace{.25cm}

\noindent Based on the random variables output by the \rw procedure, we construct our loss-estimate vector $\hbl_t\in\real^d$ with 
components
\begin{align}\label{eq:combest}
\hloss_{t,i} = K_{t,i} V_{t,i} \loss_{t,i}
\end{align}
for all $i=1,2,\dots,d$.
Since $V_{t,i}$ are nonzero only for coordinates for which $\loss_{t,i}$ is observed, these estimates are well-defined.
It also follows that the sampling procedure can be terminated once for every $i$ with $V_{t,i}=1$, there is a copy
$\bV_{t}'(k)$ such that $V_{t,i}'(k)=1$. 

Now everything is ready to define our algorithm:  \fplrw, standing for Follow-the-Perturbed-Leader with
Geometric Resampling. Defining $\hbL_t = \sum_{s=1}^t \hbl_s$, at time step $t$ \fplrw draws
the components of the perturbation vector $\bZ_t$ independently from a standard exponential distribution and selects
action\footnote{By the definition of the perturbation distribution, the minimum is unique almost surely.}
\begin{equation}\label{eq:fpl}
\bV_t = \argmin_{\bv\in\Sw} \bv\transpose \left(\eta\hbL_{t-1} - \bZ_t\right),
\end{equation}
where $\eta>0$ is a parameter of the algorithm. 
As we mentioned earlier, the distribution $\bp_t$, while implicitly specified by $\bZ_t$ and the estimated
cumulative losses $\hbL_{t-1}$, cannot usually be expressed in closed form for \fpl.\footnote{One notable exception is
when the perturbations are drawn independently from standard Gumbel distributions, and the decision set is the
$d$-dimensional simplex: in this case, \fpl is known to be equivalent with \ewa---see, e.g., \citet{ALTS14} for
further discussion.}
However, sampling the actions $\bV_t'(\cdot)$ can be carried out by drawing additional perturbation vectors
$\bZ_t'(\cdot)$ independently from
the same distribution as $\bZ_t$ and then solving a linear optimization task. 
We emphasize that the above additional actions are \emph{never actually played by the algorithm}, but are only necessary
for constructing the loss estimates. The power of \fplrw is that, unlike other algorithms for combinatorial
semi-bandits, its implementation only requires access to a linear optimization oracle over $\Sw$. We point the reader to
Section~\ref{sec:runningtime} for a more detailed discussion of the running time of \fplrw. 
Pseudocode for \fplrw is shown on as Algorithm~\ref{alg:FPLGR}.

\begin{algorithm}[t]
\caption{\fplrw implemented with a waiting list. The notation $\bm{a}\circ\bm{b}$ stands for elementwise product of 
vectors $\bm{a}$ and $\bm{b}$: $(\bm{a}\circ\bm{b})_i = a_ib_i$ for all $i$.}\label{alg:FPLGR}
\textbf{Input}: $\Sw \subseteq\ev{0,1}^d$,
 $\eta\in\real^+$, $M\in\mathbb{Z}^{+}$\;
  \textbf{Initialization}: $\hbL=\boldsymbol{0}\in\real^d$\;
  \For{t=1,\dots,T}
  {
  Draw $\bZ\in\real^d$ with independent components $Z_i \sim \text{Exp}(1)$\;
  Choose action $\bV = \displaystyle\argmin_{\bv\in\Sw} \ev{\bv\transpose \left(\eta\hbL - \bZ\right)}$;
  \tcc*[f]{Follow the perturbed leader}\\
  $\bK=0$; $\br = \bV$; \tcc*[f]{Initialize waiting list and counters}\\
  \For(\tcc*[f]{Geometric Resampling}){k=1,\dots ,M}
  {
    $\bK = \bK + \br$; \tcc*[f]{Increment counter}\\
    Draw $\bZ'\in\real^d$ with independent components $Z'_i \sim \text{Exp}(1)$\;
    $\bV'=\displaystyle\argmin_{\bv\in\Sw} \ev{\bv\transpose \left(\eta\hbL - \bZ'\right)}$; \tcc*[f]{Sample a copy of $\bV$}\\
    $\br = \br \circ \bV'$; \tcc*[f]{Update waiting list}
    \\\lIf(\tcc*[f]{All indices recurred}){$\br=0$}{break}
  }
  $\hbL=\hbL+\bK\circ \bV \circ\bloss$; \tcc*[f]{Update cumulative loss estimates}
  }
\end{algorithm}
As we will show shortly, \fplrw as defined above comes with strong performance guarantees that hold \emph{in
expectation}. One can think of several possible ways to robustify \fplrw so that it provides bounds that hold with high
probability. 
One possible path is to follow \citet{auer2002bandit} and define the loss-estimate vector $\tbl_t^*$ with components
\[
\tloss_{t,i}^* = \hloss_{t,i} - \frac{\beta}{q_{t,i}}
\]
for some $\beta >0$.
The obvious problem with this definition is that it requires perfect knowledge of the importance weights $q_{t,i}$ for
all $i$. While it is possible to extend Geometric Resampling 
developed in the previous sections to construct a reliable proxy to the above loss estimate, there are several
downsides to this approach. First, observe that one would need to obtain estimates of $1/q_{t,i}$ for every single $i$---even for the ones 
for which $V_{t,i} = 0$. Due to this necessity, there is no hope to terminate the sampling procedure in reasonable time. Second, reliable 
estimation requires multiple samples of $K_{t,i}$, where the sample size has to explicitly depend on the desired confidence level.

Thus, we follow a different path: Motivated by the work of \citet{AB10}, we propose to use a
loss-estimate vector $\tbl_t$ with components of the form
\begin{equation}\label{eq:est_hp} 
 \tloss_{t,i} = \frac{1}{\beta} \log\pa{1 + \beta \hloss_{t,i}}
\end{equation}
with an appropriately chosen $\beta>0$.
Then, defining $\tbL_{t-1} = \sum_{s=1}^{t-1} \tbl_s$, we propose a variant of \fplrw that simply replaces $\hbL_{t-1}$
by $\tbL_{t-1}$ in the rule \eqref{eq:fpl} for choosing $\bV_t$.
We refer to this variant of \fplrw as \fplrwp. In the next section, we provide performance guarantees for both
algorithms.

\subsection{Performance Guarantees}
Now we are ready to state our main results. Proofs will be presented in Section~\ref{sec:analysis}. First, we present a
performance guarantee for \fplrw in terms of the
\emph{expected regret}:
\begin{theorem}\label{thm:bandit_exp}
The expected regret of \fplrw satisfies
\[
\hR_T \le \frac{m\left(\log \pa{d/m} +1\right)}{\eta} +
2\eta mdT + \frac{dT}{eM}
\]
under semi-bandit information. In particular, with
\[
\eta = \sqrt{\frac{\log (d/m)+1}{2dT}} \qquad\mbox{and}\qquad M= \left\lceil\frac{\sqrt{dT}}{em\sqrt{2\left(\log
(d/m)+1\right)}}\right\rceil,
\]
the expected regret of \fplrw is bounded as
\[
\hR_T \le 3m\sqrt{2dT\left(\log \frac dm+1\right)}.
\]
\end{theorem}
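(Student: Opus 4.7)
The plan is to decompose the regret into the four pieces standard for FPL-style analyses and bound each separately, then optimize $\eta$ and $M$. Fix $\bv^* \in \argmin_{\bv\in\Sw}\bv^T\bL_T$ and introduce the ``cheating leader'' $\tbV_t = \argmin_{\bv\in\Sw}\bv^T(\eta\hbL_t-\bZ_t)$ which uses the same perturbation $\bZ_t$ as $\bV_t$ but already knows $\hbl_t$. Write
\[
\sum_t\bV_t^T\bloss_t - \bv^{*T}\bL_T
=\underbrace{\sum_t\bV_t^T(\bloss_t-\hbl_t)}_{(A)}
+\underbrace{\sum_t(\bV_t-\tbV_t)^T\hbl_t}_{(B)}
+\underbrace{\sum_t\tbV_t^T\hbl_t-\bv^{*T}\hbL_T}_{(C)}
+\underbrace{\bv^{*T}(\hbL_T-\bL_T)}_{(D)}.
\]

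First I would dispatch the truncation bias in $(A)$ and $(D)$. Because $K_{t,i}$ is a geometric random variable with parameter $q_{t,i}$ capped at $M$, and is independent of $\bV_t$ given $\F_{t-1}$, a direct computation gives $\EEcc{\hloss_{t,i}}{\F_{t-1}} = (1-(1-q_{t,i})^M)\loss_{t,i}$. This immediately yields $\EE{(D)}\le 0$ (the estimator underestimates the true loss), and the per-round bias on the learner is $\EEcc{\bV_t^T(\bloss_t-\hbl_t)}{\F_{t-1}} = \sum_i q_{t,i}(1-q_{t,i})^M\loss_{t,i}$, which the elementary inequality $x(1-x)^M\le 1/(eM)$ bounds by $d/(eM)$. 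Summing gives $\EE{(A)}\le dT/(eM)$.

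Next I would handle $(C)$ by the Be-the-Leader lemma applied to the linear losses $\hbl_1,\ldots,\hbl_T$ with the initial ``pre-loss'' $-\bZ_1/\eta$: this yields the deterministic inequality $\sum_t\tbV_t^T\hbl_t\le \bv^{*T}\hbL_T + \tfrac{1}{\eta}\max_{\bv\in\Sw}\bv^T\bZ_1$. Taking expectations, it remains to bound $\EE{\max_{\bv\in\Sw}\bv^T\bZ_1}$, which is the expected sum of the top $m$ of $d$ i.i.d.\ standard exponentials. A standard order-statistic estimate gives $\EE{\max_{\bv\in\Sw}\bv^T\bZ_1}\le m(\log(d/m)+1)$, so $\EE{(C)}\le \frac{m(\log(d/m)+1)}{\eta}$.

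Step $(B)$, the one-step stability, is the crux and the step I expect to be the main obstacle. Here $\tbV_t$ is just FPL with the perturbation shifted by $\eta\hbl_t\ge 0$, i.e.\ $\tbV_t = F(\bZ_t-\eta\hbl_t)$ where $F(\bz)=\argmin_{\bv\in\Sw}\bv^T(\eta\hbL_{t-1}-\bz)$ and $\bV_t=F(\bZ_t)$. The memoryless property of the exponential distribution is the right tool: conditional on the event $\{Z_{t,i}\ge\eta\hloss_{t,i}\ \forall i\}$, which has conditional probability $e^{-\eta\bi^T\hbl_t}$, the shifted perturbation $\bZ_t-\eta\hbl_t$ is again i.i.d.\ Exp(1), so on that event $\tbV_t$ is distributed exactly as $\bV_t$. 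Combined with the nonnegativity of $\hbl_t$, this upgrades to the density-ratio inequality $\EEcct{\tbV_t^T\hbl_t}{\F_{t-1},\hbl_t}\ge e^{-\eta\bi^T\hbl_t}\EEcct{\bV_t^T\hbl_t}{\F_{t-1},\hbl_t}$, and $1-e^{-x}\le x$ converts this to an additive bound. The delicate part is taking the outer expectation: $\bi^T\hbl_t$ is random with values as large as $Md$ but expectation only $\sum_i\loss_{t,i}\le d$, so one must carefully use $\|\bV_t\|_1\le m$ and $\EEcc{K_{t,i}}{\F_{t-1}}\le 1/q_{t,i}$ in concert to extract a per-round bound of $2\eta md$, summing to $\EE{(B)}\le 2\eta m dT$.

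Combining $(A)$--$(D)$ yields the first displayed inequality of the theorem. The second follows by substituting the stated choices of $\eta$ and $M$: a quick check shows that $\frac{m(\log(d/m)+1)}{\eta}$ and $2\eta md T$ both equal $m\sqrt{2dT(\log(d/m)+1)}$, while $dT/(eM)$ contributes the same quantity up to the ceiling, so the three terms together give $3m\sqrt{2dT(\log(d/m)+1)}$.
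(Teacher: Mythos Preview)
Your handling of (A) and (D) matches the paper (its Lemma~3), and the bound $\EE{\max_{\bv\in\Sw}\bv\transpose\bZ}\le m(\log(d/m)+1)$ is correct. The gap is in (B) and (C), and it stems from a single source: you let $\tbV_t$ share the perturbation $\bZ_t$ with $\bV_t$, but in the semi-bandit setting $\hbl_t$ depends on $\bV_t$ and hence on $\bZ_t$. This dependence breaks both arguments. For (C), Be-the-Leader requires a \emph{fixed} perturbation throughout; your $\tbV_t$'s use $\bZ_t$, and you cannot pass from $\bZ_t$ to $\bZ_1$ in expectation because $\EEcc{\tbV_t\transpose\hbl_t}{\F_{t-1}}$ is \emph{not} equal to $\sum_{\bu}\tp_t(\bu)\,\bu\transpose\hbl_t$ once $\hbl_t$ and $\bZ_t$ are correlated. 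For (B), conditioning on $\hbl_t$ already constrains $\bZ_t$ (it partially reveals $\bV_t$), so $\bZ_t$ is no longer exponential and the memoryless step $\PPcct{Z_{t,i}\ge\eta\hloss_{t,i}\ \forall i}{\hbl_t}=e^{-\eta\bi\transpose\hbl_t}$ fails. The paper avoids this by defining the virtual algorithm with a \emph{fresh} perturbation $\tbZ$ and comparing the distributions $p_t$ and $\tp_t$ directly (Lemmas~5 and~6), so that the density-ratio argument acts on a perturbation that is genuinely independent of $\hbl_t$.

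There is a second, quantitative problem that would remain even if your memoryless step were valid. Your density-ratio inequality carries $\bi\transpose\hbl_t=\bV_t\transpose\hbl_t$ in the exponent, so after $1-e^{-x}\le x$ the additive term is $\eta\,\EE{(\bV_t\transpose\hbl_t)^2}$. This is \emph{not} $O(md)$: because the action in the square is the very $\bV_t$ used to build $\hbl_t$, you only get $\EE{V_{t,i}}=q_{t,i}$ rather than $q_{t,i}^2$ against the $\EE{K_{t,i}^2}\le 2/q_{t,i}^2$, leaving uncontrolled $1/q_{t,i}$ terms. The paper's Lemma~6 sidesteps this via the ``sparse-loss'' trick: for each fixed $\bu\in\Sw$ it shifts only the components of $\hbl_t$ that lie in $\bu$, obtaining $p_t(\bu)\le e^{\eta\,\bu\transpose\hbl_t}\tp_t(\bu)$ and hence the second-order term $\eta\sum_{\bu}p_t(\bu)\,(\bu\transpose\hbl_t)^2$. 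Here $\bu$ is an \emph{independent} copy of $\bV_t$ (weighted by $p_t$), so Lemma~4 can exploit $\EEcc{\wt V_i V_{t,i}}{\F_{t-1}}=q_{t,i}^2$ to cancel the $1/q_{t,i}^2$ and arrive at the $2md$ bound. Your sketch does not contain this decoupling idea, and the phrase ``carefully use $\|\bV_t\|_1\le m$ and $\EEcc{K_{t,i}}{\F_{t-1}}\le 1/q_{t,i}$ in concert'' does not yield $2\eta md$ without it.
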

Our second main contribution is the following bound on the regret of \fplrwp.
\begin{theorem}
 Fix an arbitrary $\delta>0$. With probability at least $1-\delta$, the regret of \fplrwp satisfies
 \[
 \begin{split}
  R_T \le& 
  \frac{m\pa{\log (d/m) + 1}}{\eta} + \eta \pa{Mm\sqrt{2 T\log\frac{5}{\delta}} + 2md\sqrt{T\log \frac{5}{\delta}} + 2mdT} + \frac 
{dT}{eM}
  \\
  & + \beta \pa{M\sqrt{2 m T\log\frac{5}{\delta}} + 2d\sqrt{T\log \frac{5}{\delta}} + 2dT} + \frac{m\log(5d/\delta)}{\beta}
\\
& + m\sqrt{2(e-2)T}\log\frac{5}{\delta} + \sqrt{8T \log \frac {5}{\delta}} + \sqrt{2(e-2)T}.
 \end{split}  
 \]
 In particular, with
 \[
 M = \left\lceil\sqrt{\frac{dT}{m}}\right\rceil,\quad\beta = \sqrt{\frac{m}{dT}},\quad
\mbox{and}\quad \eta =
\sqrt{\frac{\log(d/m) + 1}{dT}},
 \]
the regret of \fplrwp is bounded as
\[
 \begin{split}
  R_T \le& 
  3m\sqrt{dT\pa{\log \frac dm + 1}} + 
\sqrt{mdT} \pa{\log \frac{5d}{\delta} + 2} + \sqrt{2mT\log \frac{5}{\delta}} \pa{\sqrt{\log \frac{d}{m} +1}+1}
\\
& + 1.2m\sqrt{T}\log\frac{5}{\delta} + \sqrt{T} \pa{\sqrt{8\log \frac{5}{\delta}} + 1.2} + 
2\sqrt{d\log\frac{5}{\delta}} \pa{m\sqrt{\log \frac{d}{m} + 1} + \sqrt{m}}
 \end{split}
 \]
 with probability at least $1-\delta$. 
 \label{thm:highprob}
\end{theorem}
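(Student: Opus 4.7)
The plan is to decompose the regret into three main pieces, following the classical template used for high-probability analyses of bandit algorithms with log-transformed estimates: (i) the pseudo-regret of \fpl against the surrogate cumulative losses $\tbL_T$, (ii) the deviation between $\sum_t \bv\transpose\tbl_t$ and $\sum_t \bv\transpose\bloss_t$ for the comparator $\bv$, and (iii) the deviation between $\sum_t \bV_t\transpose\bloss_t$ (the actually incurred loss) and $\sum_t \bV_t\transpose\tbl_t$. Each of the five sub-$\delta$ error events that together produce the $\log(5/\delta)$ and $\log(5d/\delta)$ factors will come from controlling a distinct martingale or applying a single Markov bound, with a union bound at the end at level $\delta/5$ each.

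For piece (i), the plan is to reuse the \fpl stability analysis already invoked in the proof of Theorem~\ref{thm:bandit_exp}, but applied to the surrogate losses $\tbl_t$ rather than the raw estimates $\hbl_t$. This yields a deterministic leading term $m(\log(d/m)+1)/\eta$ plus a stability term of the form $\eta\sum_t \bV_t\transpose\tbl_t$ (or $\eta\sum_t \bV_t\transpose\hbl_t$, using $\tbl_t\le\hbl_t$). Since each $\hloss_{t,i}$ is bounded by $M$, an Azuma--Hoeffding step on the martingale $\sum_t \bV_t\transpose\hbl_t - \EEcc{\bV_t\transpose\hbl_t}{\F_{t-1}}$ yields the $\eta Mm\sqrt{2T\log(5/\delta)}$ term, while bounding the conditional expectation gives $\eta\cdot 2mdT$ (via the same calculation used in the expected-regret proof, exactly as in Theorem~\ref{thm:bandit_exp}); a second martingale concentration on the gap from the per-round expectation contributes the $2\eta md\sqrt{T\log(5/\delta)}$ term.

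For piece (ii), the key trick, borrowed from \citet{AB10}, is that $\exp\pa{\beta\sum_t \tloss_{t,i}} = \prod_t(1+\beta\hloss_{t,i})$, and since $\EEcc{1+\beta\hloss_{t,i}}{\F_{t-1}}\le 1+\beta\loss_{t,i}\le \exp(\beta\loss_{t,i})$, the process $\exp\bpa{\beta\sum_{s\le t}(\tloss_{s,i}-\loss_{s,i})}$ is a non-negative supermartingale with mean at most $1$. Markov's inequality at level $\delta/(5d)$ together with a union bound over coordinates, followed by taking the worst of $m$ coordinates in $\bv$, yields $\sum_t\bv\transpose\tbl_t\le \sum_t\bv\transpose\bloss_t + m\log(5d/\delta)/\beta$ with probability $1-\delta/5$. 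For piece (iii), using $\tloss_{t,i}\ge \hloss_{t,i} - (\beta/2)\hloss_{t,i}^2$ expresses the gap as $\sum_t \bV_t\transpose\hbl_t - \sum_t \bV_t\transpose\bloss_t$ (controlled by a martingale on $\hbl_t - \bloss_t$, contributing the bias $dT/(eM)$ via $\PPcc{K_{t,i}=M, V_{t,i}=1}{\F_{t-1}}\le(1-q_{t,i})^{M-1}q_{t,i}\le 1/(eM)$ and an Azuma step scaling with $\sqrt{2(e-2)T}$ and $\sqrt{T}$ factors) plus a quadratic term $(\beta/2)\sum_t\bV_t\transpose\hbl_t\circ\hbl_t$ bounded by $\beta$ times the same stability quantity as in piece (i), explaining the $\beta(M\sqrt{2mT\log(5/\delta)}+2d\sqrt{T\log(5/\delta)}+2dT)$ block.

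The main obstacle is bookkeeping: the loss estimates $\hbl_t$ are neither bounded by a small constant (they scale with $M$) nor have bounded conditional variance (variance scales with $M/q_{t,i}$, but $q_{t,i}$ can be tiny), so one cannot naively apply Bernstein. The fix is to always pair a crude almost-sure bound via $M$ with a sharper conditional-mean bound, use Azuma--Hoeffding with range $M$ where needed, and absorb only the conditional expectations into the deterministic $\eta\cdot 2mdT$ and $\beta\cdot 2dT$ terms. Once all five concentration events hold, substituting the stated choices of $M,\beta,\eta$ and simplifying the resulting inequality yields the displayed numerical bound; the final line of small additive terms ($m\sqrt{2(e-2)T}\log(5/\delta)$, $\sqrt{8T\log(5/\delta)}$, $\sqrt{2(e-2)T}$) arises from Freedman-type deviations on the cheap sub-martingales $\sum_t(\bV_t - \EEcc{\bV_t}{\F_{t-1}})\transpose\bloss_t$ and $\sum_t(\bv-\bV_t)\transpose(\bloss_t - \EEcc{\bloss_t}{\F_{t-1}})$ comparing the realized regret to $\hR_T$.
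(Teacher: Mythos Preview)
Your three-piece decomposition and your treatment of piece (ii) are exactly what the paper does (the supermartingale argument is Lemma~\ref{lem:bias_hp}). The gap is in piece (i): the \fpl stability bound coming from Lemma~\ref{lem:price} is \emph{quadratic}, not linear. Applied to $\tbl_t$ it reads
\[
\sum_{t=1}^T\sum_{\bu\in\Sw} p_t(\bu)\bpa{(\bu-\bv)\transpose\tbl_t}
\le \frac{m(\log(d/m)+1)}{\eta} + \eta\sum_{t=1}^T\sum_{\bu\in\Sw} p_t(\bu)\bpa{\bu\transpose\tbl_t}^2,
\]
and the entire $\eta$-block in the theorem is obtained by bounding $\sum_t\sum_\bu p_t(\bu)(\bu\transpose\hbl_t)^2$ with high probability (Lemma~\ref{lem:secondorder}): its conditional expectation is at most $2md$ by Lemma~\ref{lem:quad}, and Hoeffding--Azuma on the increments (which lie in $[-2md,\,mM]$) yields the two square-root terms. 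Your proposal instead applies Azuma to the \emph{linear} martingale $\sum_t\bV_t\transpose\hbl_t - \EEcc{\bV_t\transpose\hbl_t}{\F_{t-1}}$ and then asserts that the conditional expectation contributes $2mdT$; but $\EEcc{\bV_t\transpose\hbl_t}{\F_{t-1}} = \sum_i q_{t,i}k_{t,i}\loss_{t,i}\le d$, not $2md$, so that route does not produce the stated bound.

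A related slip runs through piece (iii): the paper works throughout with the $p_t$-averaged quantities $\sum_\bu p_t(\bu)\,\bu\transpose(\cdot) = \sum_i q_{t,i}(\cdot)_i$, not with the realized $\bV_t$. This matters because $\hbl_t$ depends on $\bV_t$. The relevant quadratic in the $\beta$-block is $\sum_i q_{t,i}\hloss_{t,i}^2$ (Lemma~\ref{lem:quad2} and the second half of Lemma~\ref{lem:secondorder}), not $\bV_t\transpose(\hbl_t\circ\hbl_t)$; and the passage from $\sum_t\bV_t\transpose\bloss_t$ to $\sum_t\sum_\bu p_t(\bu)\,\bu\transpose\hbl_t$ is done via the Freedman-type bound on the martingale $X_t=\sum_i q_{t,i}V_{t,i}\loss_{t,i}(k_{t,i}-K_{t,i})$ together with a separate Hoeffding--Azuma step on $\sum_i(V_{t,i}-q_{t,i})(1-q_{t,i})^M$. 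These two concentration events are what produce the final line $m\sqrt{2(e-2)T}\log(5/\delta)+\sqrt{8T\log(5/\delta)}+\sqrt{2(e-2)T}$, not a comparison of realized versus expected regret as you suggest.
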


\subsection{Running Time}\label{sec:runningtime}
Let us now turn our attention to computational issues.
First, we note that the efficiency of \fpl-type algorithms crucially depends on the availability of an efficient oracle
that solves the static combinatorial optimization problem of finding $\argmin_{\bv\in\Sw} \bv\transpose \bloss$.
Computing
the running time of the full-information variant of \fpl is straightforward: assuming that the oracle computes the
solution to the static problem in $O(f(\Sw))$ time,
 \fpl returns its prediction in $O(f(\Sw) + d)$ time (with the $d$ overhead coming from the time necessary to
generate the perturbations). Naturally, our loss estimation scheme multiplies these computations by the number of
samples taken in each round. While terminating the estimation procedure after $M$ samples helps in controlling the
running time with high probability, observe that the na\"ive bound of $MT$ on the number of samples becomes way too
large when setting $M$ as suggested by Theorems~\ref{thm:bandit_exp} and~\ref{thm:highprob}.
The next proposition shows that the amortized running time of Geometric Resampling remains as low as $O(d)$ even  for
large values of $M$.
\begin{proposition}
  Let $S_t$ denote the number of sample actions taken by \rw in round $t$. Then, $\EE{S_t} \le d$.
  Also, for any $\delta>0$, 
  \[
   \sum_{t=1}^T S_t \le (e-1) dT + M\log\frac 1\delta
  \]
  holds with probability at least $1-\delta$.
\end{proposition}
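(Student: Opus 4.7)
My plan is to treat $S_t$ as a capped coupon-collector-style waiting time and handle the two parts in turn: first establish $\EEcc{S_t}{\F_{t-1}}\le d$ by a geometric moment calculation per coordinate, and then upgrade this to a high-probability statement on $\sum_t S_t$ via a standard Chernoff/supermartingale argument that exploits the hard $M$-cap built into \gr.

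\textbf{Expected bound.} Fix $t$ and condition on $\F_{t-1}$. The resampled actions $\bV'_t(1),\bV'_t(2),\dots$ are drawn i.i.d.\ from $\bp_t$, independently of $\bV_t$ given $\F_{t-1}$, so for every coordinate $i$ the waiting time $\tau_i=\min\ev{k\ge 1 : V'_{t,i}(k)=1}$ is geometric with parameter $q_{t,i}$ and $\EEcc{\tau_i}{\F_{t-1}}=1/q_{t,i}$ whenever $q_{t,i}>0$. The termination rule described just before the pseudocode makes $S_t$ the first $k\le M$ at which every coordinate $i$ with $V_{t,i}=1$ has been confirmed, so $S_t\le\max_{i:V_{t,i}=1}\tau_i\le\sum_{i=1}^d V_{t,i}\tau_i$. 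Taking conditional expectations and using the conditional independence of $V_{t,i}$ and $\tau_i$ yields
\[
\EEcc{S_t}{\F_{t-1}}\;\le\;\sum_{i:\,q_{t,i}>0} q_{t,i}\cdot\frac{1}{q_{t,i}}\;\le\;d,
\]
where coordinates with $q_{t,i}=0$ contribute nothing because $V_{t,i}=0$ almost surely there.

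\textbf{High-probability bound.} Since $S_t\in[0,M]$ deterministically, I would apply the elementary inequality $e^x\le 1+(e-1)x$, valid on $[0,1]$, to $x=S_t/M$ to obtain
\[
\EEcc{\exp\pa{S_t/M}}{\F_{t-1}}\;\le\;1+\frac{e-1}{M}\EEcc{S_t}{\F_{t-1}}\;\le\;\exp\pa{\frac{(e-1)d}{M}}.
\]
Hence the process $W_t=\exp\bpa{\frac{1}{M}\sum_{s=1}^t S_s-\frac{(e-1)dt}{M}}$ is a nonnegative supermartingale with $\EE{W_T}\le 1$, and Markov's inequality on $W_T$ gives
\[
\PP{\sum_{t=1}^T S_t\ge (e-1)dT + M\log(1/\delta)}\;\le\;\delta,
\]
which is the advertised bound.

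\textbf{Main obstacle.} The only real care required is in setting up the expected-value step: translating the informal waiting-list description into the correct stopping rule and justifying the conditional independence of $\bV_t$ and the resampling variables, which is what makes the products $q_{t,i}\cdot(1/q_{t,i})$ collapse to $1$. Once that is in hand the MGF step is essentially mechanical, and the choice $\lambda=1/M$ is forced by the requirement $\lambda S_t\le 1$ needed for the linearization of $e^x$; choosing $\lambda$ any larger would both break the linearization and worsen the $(e-1)dT$ constant, while choosing it smaller would inflate the $M\log(1/\delta)$ deviation term.
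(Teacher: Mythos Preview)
Your proof is correct. For the expected bound you argue exactly as the paper does: $S_t\le\sum_i V_{t,i}\tau_i$ followed by $\EEcc{V_{t,i}\tau_i}{\F_{t-1}}=q_{t,i}\cdot 1/q_{t,i}$ (the paper writes $K_{t,j}$ for the $M$-capped version of your $\tau_j$, but the resulting inequality $\EEcc{S_t}{\F_{t-1}}\le d$ is identical).

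For the high-probability part you take a genuinely different route. The paper treats $X_t=S_t-\EEcc{S_t}{\F_{t-1}}$ as a martingale-difference sequence, bounds its conditional variance by $dM$ via $\EEcc{K_{t,i}^2}{\F_{t-1}}\le 2/q_{t,i}^2$, and then invokes the Freedman-type inequality of Lemma~\ref{lem:mart} with $B=M$ and $\Sigma_T^2\le dMT$; the constant $(e-1)dT$ emerges as $dT+(e-2)dT$ after adding back the predictable part. Your argument is more elementary: you exploit only the hard cap $S_t\le M$ to linearise the exponential via $e^x\le 1+(e-1)x$ on $[0,1]$, and the first-moment bound $\EEcc{S_t}{\F_{t-1}}\le d$ to control the resulting supermartingale directly. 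This sidesteps the second-moment computation entirely and delivers the identical constant $(e-1)$ in a single step. The paper's approach, by contrast, stays within the Freedman machinery already used elsewhere in the analysis, so it introduces no new concentration tool; yours is self-contained but specific to this lemma.
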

\begin{proof}
For proving the first statement, let us fix a time step $t$ and notice that
\[
 S_t = \max_{j:V_{t,j}=1} K_{t,j}= \max_{j=1,2,\dots,d} V_{t,j} K_{t,j} \le \sum_{j=1}^d V_{t,j} K_{t,j}.
\]
Now, observe that $\EEcc{K_{t,j}}{\F_{t-1}, V_{t,j}} \le 1/\EEcc{V_{t,j}}{\F_{t-1}}$, which gives
$\EE{S_t}\le d$, thus proving the first statement.
For the second part, notice that $X_t = \pa{S_t - \EEcc{S_t}{\F_{t-1}}}$ is a martingale-difference sequence with
respect to $\pa{\F_t}$ with $X_t\le M$ and with conditional variance
\[
\begin{split}
 \varcc{X_t}{\F_{t-1}} &= \EEcc{\pa{S_t - \EEcc{S_t}{\F_{t-1}}}^2}{\F_{t-1}} \le 
 \EEcc{S_t^2}{\F_{t-1}}
 \\
 &= \EEcc{\max_j \pa{V_{t,j} K_{t,j}}^2}{\F_{t-1}}
 \le \EEcc{\sum_{j=1}^d V_{t,j} K_{t,j}^2}{\F_{t-1}} 
 \\
 &\le \sum_{j=1}^d \min\ev{\frac{2}{q_{t,j}},M} \le dM,
\end{split}
\]
where we used $\EEcc{K_{t,i}^2}{\F_{t-1}} = \frac{2-q_{t,i}}{q_{t,i}^2}$.
Then, the second statement follows from applying a version of Freedman's inequality due to \citet{BLLRS11} (stated as
Lemma~\ref{lem:mart} in the appendix) with $B = M$ and $\Sigma_T \le dMT$.
\end{proof}
Notice that choosing $M = O\bpa{\sqrt{dT}}$ as suggested by Theorems~\ref{thm:bandit_exp} and~\ref{thm:highprob}, the
above
result  guarantees that the amortized running time of \fplrw is $O\big((d + \sqrt{d/T})\cdot(f(\Sw)+d)\big)$ with high
probability.

\section{Analysis}\label{sec:analysis}
This section presents the proofs of Theorems~\ref{thm:bandit_exp} and~\ref{thm:highprob}. In a didactic attempt, we
present statements concerning the loss-estimation procedure and the learning algorithm separately:
Section~\ref{sec:analysis_rw} presents various important properties of the loss estimates produced by Geometric Resampling, 
Section~\ref{sec:analysis_fpl} presents general tools for analyzing Follow-the-Perturbed-Leader methods.
Finally, Sections~\ref{sec:analysis_exp} and~\ref{sec:analysis_hp} put these results together to prove
Theorems~\ref{thm:bandit_exp} and~\ref{thm:highprob}, respectively.
\subsection{Properties of Geometric Resampling}\label{sec:analysis_rw}
The basic idea underlying Geometric Resampling is replacing the importance weights $1/q_{t,i}$ by appropriately defined
random variables $K_{t,i}$. As we have seen earlier (Section~\ref{sec:rw}), running \rw with $M=\infty$ amounts to
sampling each $K_{t,i}$
from a geometric distribution with expectation $1/q_{t,i}$, yielding an unbiased loss estimate. In practice, one would
want to set $M$ to a finite value to ensure that the running time of the sampling procedure is bounded. Note however
that early termination of \rw introduces a bias in the loss estimates. This section is mainly concerned with the nature
of this bias. We emphasize that the statements presented in this section remain valid no matter what randomized
algorithm generates the actions $\bV_t$. Our first lemma gives an explicit expression on the expectation of the loss
estimates generated by \rw.
\begin{lemma}\label{lem:bias}
For all $j$ and $t$ such that $q_{t,j}>0$, the loss estimates \eqref{eq:combest}  satisfy
\[
\EEcc{\hloss_{t,j}}{\F_{t-1}} = \left(1-(1-q_{t,j})^M \right) \loss_{t,j}.
\]
\end{lemma}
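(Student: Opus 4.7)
The plan is to condition on $\F_{t-1}$ and exploit the independence structure built into the \rw procedure. The key observations are: (i) by the non-oblivious adversarial protocol, $\bloss_t$ depends only on $\bV_1,\dots,\bV_{t-1}$, so $\loss_{t,j}$ is $\F_{t-1}$-measurable; (ii) the resampled copies $\bV_t'(1),\dots,\bV_t'(M)$ are drawn i.i.d.\ from $\bp_t$, independently of $\bV_t$, so conditionally on $\F_{t-1}$ the random variable $K_{t,j}$ (which is a function of the $\bV_t'(k)$'s only) is independent of $V_{t,j}$; and (iii) for each fixed $k$, $\PPcc{V_{t,j}'(k)=1}{\F_{t-1}} = q_{t,j}$, with independence across $k$.

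Using (i), I would first pull $\loss_{t,j}$ out of the conditional expectation to obtain
\[
\EEcc{\hloss_{t,j}}{\F_{t-1}} = \EEcc{V_{t,j} K_{t,j}}{\F_{t-1}}\cdot \loss_{t,j}.
\]
Then by (ii), $V_{t,j}$ and $K_{t,j}$ are conditionally independent given $\F_{t-1}$, so
\[
\EEcc{V_{t,j} K_{t,j}}{\F_{t-1}} = q_{t,j}\cdot \EEcc{K_{t,j}}{\F_{t-1}}.
\]
It remains to compute $\EEcc{K_{t,j}}{\F_{t-1}}$. By (iii) and the definition $K_{t,j}=\min\bigl(\ev{k:V_{t,j}'(k)=1}\cup\ev{M}\bigr)$, the event $\ev{K_{t,j}\ge k}$ for $k\le M$ coincides with $\ev{V_{t,j}'(1)=\cdots=V_{t,j}'(k-1)=0}$, which has conditional probability $(1-q_{t,j})^{k-1}$. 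A tail-sum computation then gives
\[
\EEcc{K_{t,j}}{\F_{t-1}} = \sum_{k=1}^M (1-q_{t,j})^{k-1} = \frac{1-(1-q_{t,j})^M}{q_{t,j}},
\]
where the last equality uses the geometric series formula and the assumption $q_{t,j}>0$. Multiplying through yields the claim.

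There is no real obstacle here; the only subtlety is to make sure that the decomposition uses the correct independence (resampled copies vs.\ the played action) and that the $q_{t,j}>0$ hypothesis is invoked at the step where we divide by $q_{t,j}$ in the closed form.
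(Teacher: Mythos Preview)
Your proposal is correct and follows essentially the same route as the paper: factor $\EEcc{\hloss_{t,j}}{\F_{t-1}}$ as $q_{t,j}\,\loss_{t,j}\,\EEcc{K_{t,j}}{\F_{t-1}}$ using the conditional independence of $V_{t,j}$ and $K_{t,j}$, then compute the truncated-geometric expectation. The only cosmetic difference is in the last step: the paper evaluates $\EEcc{K_{t,j}}{\F_{t-1}}$ by writing it as the full geometric mean minus a shifted tail series, whereas your tail-sum $\sum_{k=1}^{M}\PPcc{K_{t,j}\ge k}{\F_{t-1}}=\sum_{k=1}^{M}(1-q_{t,j})^{k-1}$ is a slightly more direct path to the same closed form.
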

\begin{proof}
Fix any $j,t$ satisfying the condition of the lemma. 
Setting $q = q_{t,j}$ for simplicity, we write
\[
\begin{split}
\EEc{K_{t,j}}{\F_{t-1}} =&
\sum_{k=1}^\infty k (1-q)^{k-1} q - \sum_{k=M}^\infty (k - M) (1-q)^{k-1} q
\\
=&
\sum_{k=1}^\infty k (1-q)^{k-1} q - (1-q)^M \sum_{k=M}^\infty (k\! - \!M) (1-q)^{k\!-\!M\!-\!1} q
\\
=&
\left(1-(1-q)^M \right)\sum_{k=1}^\infty k (1-q)^{k-1} q = \frac{1-(1-q)^M }{q}.
\end{split}
\]
The proof is concluded by combining the above with $\EEcc{\hloss_{t,j}}{\F_{t-1}} \!=\! q_{t,j} \loss_{t,j}
\EEcc{K_{t,j}}{\F_{t-1}}$.
\end{proof}
The following lemma shows two important properties of the \gr loss estimates \eqref{eq:combest}.
Roughly speaking, the first of these properties ensure that any learning algorithm relying on these estimates will be
\emph{optimistic} in the sense that the loss of any \emph{fixed} decision will be underestimated in expectation. The
second property ensures that the learner will not be \emph{overly optimistic} concerning its own performance.
\begin{lemma}\label{lem:rwprops}
 For all $\bv\in\Sw$ and $t$, the loss estimates \eqref{eq:combest} satisfy the following two properties:
 \begin{eqnarray}
  \EEcc{\bv\transpose\hbl_t}{\F_{t-1}} &\le& \bv\transpose\bloss_t,\\
  \EEcc{\sum_{\bu\in\Sw}p_t(\bu) \pa{\bu\transpose\hbl_t}}{\F_{t-1}} &\ge& 
  \sum_{\bu\in\Sw}p_t(\bu) \bpa{\bu\transpose\bloss_t} - \frac{d}{eM}.
 \end{eqnarray}
\end{lemma}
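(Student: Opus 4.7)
The plan is to derive both inequalities as direct consequences of the bias formula from Lemma~\ref{lem:bias}, $\EEcc{\hloss_{t,j}}{\F_{t-1}} = (1-(1-q_{t,j})^M)\loss_{t,j}$ (valid whenever $q_{t,j}>0$; the case $q_{t,j}=0$ forces $\hloss_{t,j}=0$ a.s.\ since $V_{t,j}=0$ a.s., and the identity still reads $0 = 0 \cdot \loss_{t,j}$). The common theme is that \gr\ produces a loss estimate that is \emph{underbiased} by a multiplicative factor $(1-q_{t,i})^M\in[0,1]$, and this multiplicative shrinkage can be controlled uniformly in the problem parameters.

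For the first inequality, I would observe that since $\bv\in\Sw\subseteq\ev{0,1}^d$ has non-negative components and $\loss_{t,i}\ge 0$, the pointwise bound $\EEcc{\hloss_{t,i}}{\F_{t-1}}=(1-(1-q_{t,i})^M)\loss_{t,i}\le \loss_{t,i}$ from Lemma~\ref{lem:bias} yields
\[
\EEcc{\bv\transpose\hbl_t}{\F_{t-1}} = \sum_{i=1}^d v_i \EEcc{\hloss_{t,i}}{\F_{t-1}} \le \sum_{i=1}^d v_i \loss_{t,i} = \bv\transpose\bloss_t,
\]
which is the first claim.

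For the second inequality, I would start by expanding the gap using Lemma~\ref{lem:bias}:
\[
\sum_{\bu\in\Sw} p_t(\bu)\bpa{\bu\transpose\bloss_t} - \EEcc{\sum_{\bu\in\Sw} p_t(\bu)\bpa{\bu\transpose\hbl_t}}{\F_{t-1}} = \sum_{i=1}^d \loss_{t,i}(1-q_{t,i})^M \sum_{\bu\in\Sw} p_t(\bu) u_i,
\]
and then recognize that $\sum_{\bu}p_t(\bu)u_i = \EEcc{V_{t,i}}{\F_{t-1}}=q_{t,i}$, which reduces the right-hand side to $\sum_{i=1}^d \loss_{t,i}\,q_{t,i}(1-q_{t,i})^M \le \sum_{i=1}^d q_{t,i}(1-q_{t,i})^M$ using $\loss_{t,i}\in[0,1]$.

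The main quantitative step is to bound the scalar quantity $q(1-q)^M$ uniformly for $q\in[0,1]$. I would use $(1-q)^M\le e^{-qM}$ to get $q(1-q)^M\le qe^{-qM}$, then note that $x\mapsto xe^{-xM}$ is maximized at $x=1/M$ with value $1/(eM)$. This yields $q_{t,i}(1-q_{t,i})^M\le 1/(eM)$ for each coordinate, so the total gap is at most $d/(eM)$, which is exactly the claimed inequality. The only delicate point is confirming the pointwise maximization of $xe^{-xM}$; this is a one-line calculus exercise, so I do not expect any real obstacle.
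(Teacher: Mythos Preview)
Your proposal is correct and follows essentially the same approach as the paper's proof: both arguments invoke Lemma~\ref{lem:bias} to get the coordinatewise bound $\EEcc{\hloss_{t,i}}{\F_{t-1}} = (1-(1-q_{t,i})^M)\loss_{t,i} \le \loss_{t,i}$ for the first inequality, then for the second inequality rewrite the gap as $\sum_i q_{t,i}(1-q_{t,i})^M\loss_{t,i}$ via $\sum_{\bu} p_t(\bu) u_i = q_{t,i}$, bound $\loss_{t,i}\le 1$, and use $(1-q)^M\le e^{-qM}$ together with the maximization of $q e^{-qM}$ at $q=1/M$ to obtain $d/(eM)$. Your explicit treatment of the degenerate case $q_{t,j}=0$ is a nice addition.
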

\begin{proof}
 Fix any $\bv\in\Sw$ and $t$. 
 The first property is an immediate consequence of Lemma~\ref{lem:bias}: we have that $\EEcc{\hloss_{t,k}}{\F_{t-1}}\le
\loss_{t,k}$ for all $k$, and thus $\EEcc{\bv\transpose\hbl_t}{\F_{t-1}} \le \bv\transpose \bloss_t$. 
For the second statement, observe that 
\[
\begin{split}
\EEcc{\sum_{\bu\in\Sw}p_t(\bu) \pa{\bu\transpose\hbl_t }}{\F_{t-1}} &= 
\sum_{i=1}^d q_{t,i} \EEcc{\hloss_{t,i}}{\F_{t-1}}
= \sum_{i=1}^d q_{t,i} \left(1-(1-q_{t,i})^M \right) \loss_{t,i}
\end{split}
\]
also holds by Lemma~\ref{lem:bias}. 
To control the bias term $\sum_i q_{t,i} (1-q_{t,i})^M$, note that $q_{t,i} (1-q_{t,i})^M \le q_{t,i} e^{-Mq_{t,i}}$.
By elementary calculations, one can show that $f(q) = qe^{-Mq}$ takes its maximum at $q=\frac 1M$ and thus
$
\sum_{i=1}^d q_{t,i} (1-q_{t,i})^M \le \frac{d}{eM}.
$
\end{proof}
Our last lemma concerning the loss estimates \eqref{eq:combest} bounds the conditional variance of the estimated
loss of the learner. This term plays a key role in the performance analysis of \exph-style algorithms (see,
e.g., \citet{auer2002bandit,UNK10,audibert13regret}), as well as in the analysis presented in the current paper.
\begin{lemma}\label{lem:quad}
 For all $t$, the loss estimates \eqref{eq:combest} satisfy 
 \[
  \EEcc{\sum_{\bu\in\Sw}p_t(\bu) \pa{\bu\transpose\hbl_t}^2}{\F_{t-1}} \le 2md.
 \]
\end{lemma}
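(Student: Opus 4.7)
The plan is to unfold the definition of $\hbl_t$, pass the conditional expectation inside the sum (which is legal because $p_t(\bu)$ is $\F_{t-1}$-measurable), and then control the second moment of each $(\bu\transpose\hbl_t)^2$ separately before reassembling.

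First I would write $\bu\transpose\hbl_t = \sum_i u_i V_{t,i} K_{t,i} \loss_{t,i}$ and apply Cauchy--Schwarz with the split $u_i V_{t,i} K_{t,i} \loss_{t,i} = \sqrt{u_i V_{t,i}} \cdot \sqrt{u_i V_{t,i}}\, K_{t,i} \loss_{t,i}$ (using that $u_i, V_{t,i} \in \{0,1\}$). This yields
\[
(\bu\transpose\hbl_t)^2 \le \Bpa{\sum_i u_i V_{t,i}} \Bpa{\sum_i u_i V_{t,i} K_{t,i}^2 \loss_{t,i}^2} \le m \sum_i u_i V_{t,i} K_{t,i}^2,
\]
using $\sum_i u_i V_{t,i} \le \onenorm{\bu} \le m$ and $\loss_{t,i} \le 1$.

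Next, the crucial observation is that the resampled actions $\bV'_t(1),\dots,\bV'_t(M)$ are drawn independently from $\bp_t$ and independently of $\bV_t$ given $\F_{t-1}$. Hence $K_{t,i}$ and $V_{t,i}$ are conditionally independent given $\F_{t-1}$. Moreover, $K_{t,i}$ is stochastically dominated by a geometric random variable with parameter $q_{t,i}$ (it equals $\min(G_{t,i}, M)$ for such a $G_{t,i}$), so
\[
\EEcc{K_{t,i}^2}{\F_{t-1}} \le \frac{2-q_{t,i}}{q_{t,i}^2} \le \frac{2}{q_{t,i}}\cdot\frac{1}{q_{t,i}}.
\]
Combining independence with $\EEcc{V_{t,i}}{\F_{t-1}} = q_{t,i}$ gives $\EEcc{V_{t,i} K_{t,i}^2}{\F_{t-1}} \le 2/q_{t,i}$ (interpreting the corresponding term as $0$ whenever $q_{t,i}=0$, since then $V_{t,i}=0$ almost surely).

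Finally, I would put everything together: since $p_t(\bu)$ is $\F_{t-1}$-measurable,
\[
\EEcc{\sum_{\bu\in\Sw} p_t(\bu) (\bu\transpose\hbl_t)^2}{\F_{t-1}} \le m \sum_{\bu\in\Sw} p_t(\bu) \sum_i u_i \cdot \frac{2}{q_{t,i}} = 2m\sum_{i=1}^d \frac{q_{t,i}}{q_{t,i}} = 2md,
\]
where the last equality uses $\sum_\bu p_t(\bu) u_i = q_{t,i}$. The only delicate step is the one I expect to be the main point of care: correctly invoking the conditional independence of $K_{t,i}$ and $V_{t,i}$, and handling the $q_{t,i}=0$ coordinates so that the cancellation $q_{t,i}/q_{t,i}=1$ is legitimate; everything else is a mechanical application of Cauchy--Schwarz and the geometric second-moment bound.
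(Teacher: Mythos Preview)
Your proof is correct and follows essentially the same strategy as the paper: both arguments hinge on the conditional independence of $K_{t,i}$ and $V_{t,i}$ given $\F_{t-1}$, the truncated-geometric second-moment bound $\EEcc{K_{t,i}^2}{\F_{t-1}}\le 2/q_{t,i}^2$, and the cancellation $\sum_{\bu}p_t(\bu)u_i=q_{t,i}$. The only cosmetic difference is how the cross terms $K_{t,i}K_{t,j}$ are eliminated: you apply Cauchy--Schwarz to the single sum and extract the factor $\sum_i u_i V_{t,i}\le m$ up front, whereas the paper expands the square into a double sum, uses $2K_{t,i}K_{t,j}\le K_{t,i}^2+K_{t,j}^2$, and introduces an independent copy $\tbV$ of $\bV_t$ to play the role of your explicit average over $\bu$. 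Your route is arguably a bit more direct; the paper's route retains the $\loss_{t,j}$ factors longer and ends with $2m\sum_j\loss_{t,j}\le 2md$, but the substance is identical.
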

Before proving the statement, we remark that the conditional variance can be bounded as $md$ for the
standard (although usually infeasible) loss estimates \eqref{eq:combest_old}. That is, the above lemma shows that,
somewhat surprisingly, the variance of our estimates is only twice as large as the variance of the standard estimates.
\begin{proof}
Fix an arbitrary $t$. For simplifying notation below, let us introduce $\tbV$ as an independent copy of $\bV_t$ such
that $\PPcc{\tbV=\bv}{\F_{t-1}} = p_t(\bv)$ holds for all $\bv\in\Sw$. To begin, observe that for any $i$
\begin{equation}\label{eq:K2bound}
\EEcc{K_{t,i}^2}{\F_{t-1}} \le \frac{2-q_{t,i}}{q_{t,i}^2} \le \frac{2}{q_{t,i}^2}
\end{equation}
holds, as $K_{t,i}$ has a truncated geometric law.
The statement is proven as 
\[
\begin{split}
\EEcc{\sum_{\bu\in\Sw}p_t(\bu) \pa{\bu\transpose\hbl_t}^2}{\F_{t-1}}
&=
\EEcc{\sum_{i=1}^d\sum_{j=1}^d \left(\wt{V}_{i}\hloss_{t,i}\right)\left(\wt{V}_{j}\hloss_{t,j}\right)}{\F_{t-1}}
\\
&=
\EEcc{\sum_{i=1}^d\sum_{j=1}^d \left(\wt{V}_{i}K_{t,i}V_{t,i}\loss_{t,i}\right)
\left(\wt{V}_{j}K_{t,j}V_{t,j}\loss_{t,j}\right)}{\F_{t-1}}
\\
&\qquad\qquad\mbox{(using the definition of $\hbl_t$)}
\\
&\le
\EEcc{\sum_{i=1}^d\sum_{j=1}^d
\frac{K_{t,i}^2+K_{t,j}^2}{2}\left(\wt{V}_{i}V_{t,i}\loss_{t,i}\right)\left(\wt{V}_{j}V_{t,j}\loss_{t,j}\right)}
{\F_{t-1}}
\\
&\qquad\qquad\mbox{(using $2AB\le A^2 + B^2$)}
\\
&\le
2\EEcc{\sum_{i=1}^d\frac{1}{q_{t,i}^2}\left(\wt{V}_{i}V_{t,i}\loss_{t,i}\right)
\sum_{j=1}^d V_{t,j}\loss_{t,j}}{\F_{t-1}}
\\
&\qquad\qquad\mbox{(using symmetry, Eq.~\eqref{eq:K2bound} and $\wt{V}_j\le 1$)}
\\
&\le
2m\EEcc{\sum_{j=1}^d\loss_{t,j}}{\F_{t-1}} \le 2md,
\end{split}
\]
where the last line follows from using $\onenorm{\bV_t}\le m$, $\infnorm{\bloss_t}\le 1$, and
$\EEcc{V_{t,i}}{\F_{t-1}}=\EEcc{\wt{V}_{i}}{\F_{t-1}}=q_{t,i}$.
\end{proof}

\subsection{General Tools for Analyzing \fpl}\label{sec:analysis_fpl}
In this section, we present the key tools for analyzing the \fpl-component of our learning algorithm.
In some respect, our analysis is a synthesis of previous work on \fpl-style methods: we borrow several ideas
from \citet{Pol05} and the proof of Corollary~4.5 in \citet{CBLu06:book}. Nevertheless, our analysis is the first  to
directly target combinatorial settings, and yields the tightest known bounds for \fpl in this domain. Indeed, the tools
developed in this section also permit an improvement for \fpl in the full-information setting, closing the presumed
performance gap between \fpl and \ewa in both the full-information and the semi-bandit settings. The statements we
present in this section are not specific to the loss-estimate vectors used by \fplrw.

Like most other known work, we study the performance of the learning algorithm through a \emph{virtual algorithm} that
(\emph{i}) uses a time-independent perturbation vector and (\emph{ii})~is~allowed to peek one step into the future.
Specifically,
letting $\tbZ$ be a perturbation vector drawn independently from the same distribution as
$\bZ_1$, the virtual algorithm picks its $t\th$ action as
\begin{equation}\label{eq:cheater}
\tbV_t = \argmin_{\bv\in\Sw} \ev{\bv\transpose\left(\eta \hbL_{t} - \tbZ\right)}.
\end{equation}
In what follows, we will crucially use that $\tbV_t$ and $\bV_{t+1}$ {are conditionally
independent and identically distributed given $\F_{t}$}. In particular, 
introducing the notations
\begin{align*}
q_{t,i} &= \EEcc{V_{t,i}}{\F_{t-1}}  & \tq_{t,i} &= \EEcc{\tV_{t,i}}{\F_{t}}\\
p_{t}(\bv) &= \PPcc{\bV_t=\bv}{\F_{t-1}}    & \tp_{t}(\bv) &= \PPcc{\tbV_t=\bv}{\F_{t}},
\end{align*}
we will exploit the above property by using $q_{t,i} = \tq_{t-1,i}$ and $p_{t}(\bv) = \tp_{t-1}(\bv)$ numerous times in
the proofs below.

First, we show a regret bound on the virtual algorithm that plays the action sequence $\tbV_1,\tbV_{2},\dots,\tbV_T$.
\begin{lemma}\label{lem:cheat}
For any $\bv\in\Sw$,
 \begin{equation}
\begin{split}
\sum_{t=1}^T \sum_{\bu\in\Sw} \tp_t(\bu)\pa{\left(\bu - \bv\right)\transpose \hbl_t} \le \frac{m\left(\log
(d/m)+1\right)}{\eta}.
\end{split}
\end{equation}
\end{lemma}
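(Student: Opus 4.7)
The plan is to prove this via the classical \emph{Be-the-Leader} (BTL) analysis of Follow-the-Perturbed-Leader, tailored to the virtual algorithm defined in \eqref{eq:cheater}. The key observation is that $\tbV_t$, unlike $\bV_t$, has access to $\hbl_t$ itself (it ``peeks'' one step ahead) and uses a \emph{time-independent} perturbation $\tbZ$. This means we can view $\tbV_t$ as the exact minimizer of a cumulative objective, which is precisely the setting where BTL applies cleanly.

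More concretely, I would define the augmented sequence $f_0(\bv) = -\bv\transpose \tbZ$ and $f_t(\bv) = \eta\, \bv\transpose \hbl_t$ for $t \ge 1$, and let $\tbV_0 = \argmax_{\bv\in\Sw} \bv\transpose\tbZ$, so that $\tbV_t = \argmin_{\bv\in\Sw} \sum_{s=0}^{t} f_s(\bv)$ holds for every $t \ge 0$ almost surely (the minimizers are unique a.s.\ since $\tbZ$ is continuously distributed). The standard BTL lemma (proved by a one-line induction) then gives
\[
\sum_{t=0}^{T} f_t(\tbV_t) \le \sum_{t=0}^{T} f_t(\bv)
\]
for every $\bv\in\Sw$. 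Unpacking the definitions and dividing through by $\eta$ yields the pathwise inequality
\[
\sum_{t=1}^{T} (\tbV_t - \bv)\transpose \hbl_t \le \frac{1}{\eta}\, (\tbV_0 - \bv)\transpose \tbZ.
\]

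Next I would take conditional expectation with respect to $\tbZ$ given $\F_T$. Since $\tbZ$ is drawn independently of the history and $\hbl_t$ is $\F_t$-measurable, the left-hand side becomes $\sum_{t=1}^{T}\sum_{\bu \in \Sw}\tp_t(\bu)(\bu-\bv)\transpose \hbl_t$, while the right-hand side becomes $\tfrac{1}{\eta}\bigl(\EE{\tbV_0\transpose \tbZ} - \bv\transpose \EE{\tbZ}\bigr)$. The components of $\tbZ$ are $\mathrm{Exp}(1)$, so $\EE{\tbZ} = \bi$, and since $\bv \ge 0$ the subtracted term $\bv\transpose \bi \ge 0$ can be dropped, yielding
\[
\sum_{t=1}^{T}\sum_{\bu\in\Sw}\tp_t(\bu)\pa{(\bu-\bv)\transpose \hbl_t} \le \frac{1}{\eta}\,\EE{\tbV_0\transpose \tbZ}.
\]

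The remaining step is to bound $\EE{\tbV_0\transpose \tbZ} = \EE{\max_{\bv\in\Sw}\bv\transpose \tbZ}$. Since $\Sw \subseteq \{0,1\}^d$ with $\onenorm{\bv}\le m$, this expectation is upper bounded by the expected sum of the top $m$ order statistics of $d$ i.i.d.\ $\mathrm{Exp}(1)$ random variables. Using the standard identity $\EE{Z_{(k)}} = \sum_{i=d-k+1}^{d} 1/i$ for the $k$-th largest, the sum telescopes into $\sum_{i=1}^{d} \min(i,m)/i = m + m\sum_{i=m+1}^{d} 1/i \le m\bpa{\log(d/m)+1}$. Combining this with the inequality above completes the proof. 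The main obstacles are purely bookkeeping: applying BTL correctly with the phantom $t=0$ step, and handling the conditioning so that the final bound is a deterministic statement about $\F_T$-measurable quantities rather than an expectation over the entire randomness.
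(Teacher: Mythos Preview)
Your proposal is correct and follows essentially the same route as the paper: both apply the Be-the-Leader lemma to the virtual sequence (the paper folds the perturbation into the first loss rather than introducing a phantom step $t=0$, but this is cosmetic), then integrate over $\tbZ$ and bound $\EE{\max_{\bv\in\Sw}\bv\transpose\tbZ}$ by the expected sum of the top $m$ exponential order statistics. One small slip: your stated identity $\EE{Z_{(k)}}=\sum_{i=d-k+1}^d 1/i$ is the formula for the $k$-th \emph{smallest}, not the $k$-th largest (for the largest, $k=1$, it would give $1/d$ instead of $H_d$); the correct identity for the $k$-th largest is $\sum_{i=k}^d 1/i$, which is what actually telescopes to $\sum_{i=1}^d \min(i,m)/i$ and yields your final bound.
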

Although the proof of this statement is rather standard, we include it for completeness. We also note that the lemma
slightly improves other known results by replacing the usual $\log d$ term by $\log(d/m)$.
\begin{proof}
 Fix any $\bv\in\Sw$. Using Lemma~3.1 of \citet{CBLu06:book} (sometimes referred to as the
{\sl ``follow-the-leader/be-the-leader''} lemma) for the sequence $\bpa{\eta\hbl_1 -
\tbZ,\allowbreak\eta\hbl_2,\allowbreak\dots,\allowbreak\eta\hbl_T}$, we
obtain
\[
\eta \sum_{t=1}^T \tbV_t\transpose \hbl_t - \tbV_1\transpose \tbZ \le \eta \sum_{t=1}^T \bv\transpose \hbl_t -
\bv\transpose \tbZ.
\]
Reordering and integrating both sides with respect to the distribution of $\tbZ$ gives
\begin{equation}
\begin{split}
\eta \sum_{t=1}^T \sum_{\bu\in\Sw} \tp_t(\bu) \pa{\left(\bu - \bv\right)\transpose \hbl_t }\le \EE{\left(\tbV_1 -
\bv\right)\transpose\tbZ}.
\end{split}
\end{equation}
The statement follows from using $\EE{\tbV_1\transpose \tbZ} \le m(\log(d/m) + 1)$, which is proven in
Appendix~\ref{app:proofs} as Lemma~\ref{lem:expmax}, noting that $\tbV_1\transpose \tbZ$ is upper-bounded by the sum of the $m$ 
largest components of $\tbZ$.
\end{proof}
The next lemma relates the performance of the virtual algorithm to the actual performance of \fpl.  The lemma relies on
a ``sparse-loss'' trick similar to the trick used in the proof Corollary~4.5 in \citet{CBLu06:book}, and is also
related to the ``unit rule'' discussed by \citet{KWK10}.
\begin{lemma}\label{lem:price}
 For all $t=1,2,\dots,T$, assume that $\hbl_t$ is such that $\hloss_{t,k}\ge 0$ for all $k\in\ev{1,2,\dots,d}$. Then,
\[
 \sum_{\bu\in\Sw} \bigl(p_t(\bu) - \tp_t(\bu)\bigr) \pa{\bu\transpose \hbl_t }\le \eta \sum_{\bu\in\Sw} p_t(\bu)
\left(\bu\transpose \hbl_{t}\right)^2.
\] 
\end{lemma}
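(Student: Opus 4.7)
The plan is to reduce the lemma to a one-step \emph{stability} bound of the form
\[
p_t(\bu)\le e^{\eta\bu\transpose\hbl_t}\,\tp_t(\bu)\qquad(\bu\in\Sw),
\]
from which the quadratic estimate follows by elementary manipulations. Because $\hbl_t$ and $\hbL_{t-1}$ are both $\F_t$-measurable, I would condition on $\F_t$ and treat them as fixed constants, reducing the claim to a deterministic comparison of two FPL distributions whose defining cumulative loss vectors differ by $\eta\hbl_t$ and whose perturbations are both i.i.d.\ standard exponential. Writing the two argmin events as
\[
A_\bu^{+}=\{\bz\in\real^d:\bu=\argmin_{\bv\in\Sw}\bv\transpose(\eta\hbL_{t-1}-\bz)\},\quad A_\bu^{\sim}=\{\bz:\bu=\argmin_{\bv\in\Sw}\bv\transpose(\eta\hbL_t-\bz)\},
\]
expresses $p_t(\bu)$ and $\tp_t(\bu)$ as integrals of $\prod_i e^{-z_i}\II{z_i\ge 0}$ over $A_\bu^+$ and $A_\bu^\sim$ respectively.

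The essential step is the sparsification identity $A_\bu^{+}+\eta\hbl_t\circ\bu\subseteq A_\bu^{\sim}$: translating any $\bz\in A_\bu^+$ \emph{only along coordinates in the support of} $\bu$ by $\eta\hloss_{t,i}$ preserves $\bu$-optimality in the tilde problem. A direct expansion shows that the optimality gap of any competitor $\bv$ against $\bu$ at the shifted point equals the original (non-negative) gap at $\bz$ plus a correction $\eta(\bv-\bu)\transpose(\hbl_t\circ(\bi-\bu))$, and this correction is itself non-negative because $\hbl_t\circ(\bi-\bu)$ is supported precisely where $u_i=0$ and there $v_i\ge 0$. Performing this translation as a change of variables inside the integral defining $p_t(\bu)$ has unit Jacobian and extracts a multiplicative factor of $\prod_i e^{\eta\hloss_{t,i}u_i}=e^{\eta\bu\transpose\hbl_t}$; the residual indicator $\II{z_i\ge\eta\hloss_{t,i}u_i}$ is dominated by $\II{z_i\ge 0}$ since $\hbl_t\ge 0$; and the shifted domain lies inside $A_\bu^\sim$. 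Chaining these three facts yields the stability bound.

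Once the stability bound is in hand, the lemma follows in two lines. Rearranging gives $p_t(\bu)-\tp_t(\bu)\le p_t(\bu)(1-e^{-\eta\bu\transpose\hbl_t})$; multiplying by the non-negative scalar $\bu\transpose\hbl_t$ (non-negative precisely because $\hbl_t\ge 0$) preserves the inequality; invoking $1-e^{-x}\le x$ for $x\ge 0$ upgrades the right-hand side to $\eta p_t(\bu)(\bu\transpose\hbl_t)^2$; and summing over $\bu\in\Sw$ delivers the claim.

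The only genuinely delicate step is the sparsification. A naive density-ratio argument that shifts the perturbation in \emph{all} coordinates would pay a factor of $e^{\eta\|\hbl_t\|_1}$, which can be as large as $e^{\eta Md}$ under Geometric Resampling and would ruin the resulting quadratic bound. Noticing that only the coordinates in $\bu$'s support need be shifted to keep $\bu$ optimal under the tilde problem---the ``unit-rule''/sparse-loss idea the authors allude to---is what collapses the exponent to $\eta\bu\transpose\hbl_t\le \eta m$, and after that the algebra is routine.
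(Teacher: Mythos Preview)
Your proof is correct and follows essentially the same route as the paper: both obtain the pointwise stability inequality $p_t(\bu)\le e^{\eta\,\bu\transpose\hbl_t}\,\tp_t(\bu)$ by shifting the exponential perturbation only along the coordinates in the support of $\bu$ (the ``sparse-loss'' trick), then conclude via $1-e^{-x}\le x$. The only cosmetic difference is that you prove the inclusion $A_\bu^{+}+\eta\hbl_t\circ\bu\subseteq A_\bu^{\sim}$ in one step, whereas the paper factors it through the intermediate distribution $p_t^-(\bu)$ (the law under the sparsified loss $\hbl_t\circ\bu$) and invokes a separate monotonicity lemma (Lemma~\ref{lem:modified-loss}) for $p_t^-(\bu)\le\tp_t(\bu)$; your presentation simply composes these two steps.
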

\begin{proof}
Fix an arbitrary $t$ and $\bu\in\Sw$, and define the ``sparse loss vector'' $\hbl^-_t(\bu)$ with components
$\hloss^-_{t,k}(\bu) = u_k \hloss_{t,k}$ and
\[
\bV^-_t(\bu) = \argmin_{\bv\in\Sw} \ev{\bv\transpose\left(\eta\hbL_{t-1} + \eta\hbl^-_t(\bu) - \tbZ\right)}.
\]
Using the notation $p^-_{t}(\bu) = \PPcc{\bV^-_{t}(\bu)=\bu}{\F_{t}}$, we show in Lemma~\ref{lem:modified-loss}
(stated and proved in Appendix~\ref{app:proofs}) that $p^-_{t}(\bu)\le \tp_{t}(\bu)$.
Also, define
\[
\bU(\bz) = \argmin_{\bv\in\Sw} \ev{\bv\transpose\left(\eta\hbL_{t-1} - \bz\right)}.
\]
Letting $f(\bz) = e^{-\onenorm{\bz}}$ $(\bz\in\real_+^d)$ be the density of the perturbations, we  have
\[
\begin{split}
p_{t}(\bu)
&=\int\limits_{\bz\in[0,\infty]^d} \II{\bU(\bz)=\bu} f(\bz) \,d\bz
\\
&= e^{\eta \left\|\hbl^-_{t}(\bu)\right\|_1} \int\limits_{\bz\in[0,\infty]^d} \II{\bU(\bz)=\bu}
f\left(\bz+\eta\hbl^-_t(\bu)\right) 
\,d\bz
\\
&= e^{\eta \left\|\hbl^-_{t}(\bu)\right\|_1} \idotsint\limits_{z_i\in[\hloss^-_{t,i}(\bu),\infty]}
\II{\bU\left(\bz-\eta\hbl^-_t(\bu)\right)=\bu} f(\bz)  \,d\bz
\\
&\le e^{\eta \left\|\hbl^-_{t}(\bu)\right\|_1} \int\limits_{\bz\in[0,\infty]^d}
\II{\bU\left(\bz-\eta\hbl^-_t(\bu)\right)=\bu} f(\bz) 
\,d\bz
\\
&\le e^{\eta \left\|\hbl^-_{t}(\bu)\right\|_1} p^-_{t}(\bu) \le e^{\eta \left\|\hbl^-_{t}(\bu)\right\|_1} \tp_{t}(\bu).
\end{split}
\]
Now notice that $\bigl\|\hbl^-_{t}(\bu)\bigr\|_1 = \bu\transpose \hbl^-_{t}(\bu) = \bu\transpose \hbl_{t}$, which gives
\[
\begin{split}
\tp_{t}(\bu) &\ge p_{t}(\bu) e^{-\eta \bu\transpose \hbl_{t}} \ge p_{t}(\bu) \left(1 - \eta \bu\transpose
\hbl_{t}\right).
\end{split}
\]
The proof is concluded by repeating the same argument for all $\bu\in\Sw$, reordering and summing the terms as
\begin{equation}
\begin{split}\label{eq:pineq}
\sum_{\bu\in\Sw} p_t(\bu) \pa{\bu\transpose \hbl_t}
&\le \sum_{\bu\in\Sw} \tp_{t}(\bu) \pa{\bu\transpose \hbl_{t}} + \eta \sum_{\bu\in\Sw} p_{t}(\bu) \left(\bu\transpose
\hbl_{t}\right)^2.
\end{split}
\end{equation}
\end{proof}

\subsection{Proof of Theorem~\ref{thm:bandit_exp}}\label{sec:analysis_exp}
Now, everything is ready to prove the bound on the expected regret of \fplrw. 
Let us fix an arbitrary $\bv\in\Sw$.
 By putting together Lemmas~\ref{lem:quad}, \ref{lem:cheat} and~\ref{lem:price}, we immediately obtain
 \begin{equation}\label{eq:estbound}
  \EE{\sum_{t=1}^T \sum_{\bu\in\Sw} p_t(\bu)\pa{\left(\bu - \bv\right)\transpose \hbl_t} }\le \frac{m\left(\log
(d/m)+1\right)}{\eta} + 2\eta mdT,
 \end{equation}
 leaving us with the problem of upper bounding the expected regret in terms of the left-hand side of the above
inequality. This can be done by using the properties of the loss estimates \eqref{eq:combest} stated in
Lemma~\ref{lem:rwprops}:
\[
 \EE{\sum_{t=1}^T \left(\bV_{t} - \bv\right)\transpose \bloss_t} \le \EE{\sum_{t=1}^T \sum_{\bu\in\Sw}
p_t(\bu) \pa{\left(\bu - \bv\right)\transpose\hbl_t}} +
\frac{dT}{eM}.
\]
Putting the two inequalities together proves the theorem.

\subsection{Proof of Theorem~\ref{thm:highprob}}\label{sec:analysis_hp}
We now turn to prove a bound on the regret of \fplrwp that holds with high probability. 
We begin by noting that the conditions of Lemmas~\ref{lem:cheat} and~\ref{lem:price} continue to hold for the new loss estimates, 
so we can obtain the central terms in the regret:
\[
 \sum_{t=1}^T \sum_{\bu\in\Sw} \p_t(\bu)\pa{\pa{\bu - \bv}\transpose\tbl_t }\le \frac{m (\log (d/m) + 1)}{\eta} + \eta
\sum_{t=1}^T \sum_{\bu\in\Sw} p_t(\bu)\pa{\bu\transpose\tbl_t}^2.
\]
The first challenge posed by the above expression is relating the left-hand side to the true regret with high
probability. Once this is done, the remaining challenge is to bound the second term on the right-hand side, as well as
the other terms arising from the first step. We first show that the loss estimates used by \fplrwp consistently
underestimate the true losses with high probability.
\begin{lemma}\label{lem:bias_hp}
Fix any $\delta'>0$. For any $\bv\in\Sw$,
\[
\bv\transpose\pa{\tbL_{T} - \bL_{T}} \le  \frac{m\log\pa{d/\delta'}}{\beta}
\]
holds with probability at least $1-\delta'$.
\end{lemma}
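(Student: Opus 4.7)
The plan is to use the classical trick that the transformation $\tloss_{t,i} = \frac{1}{\beta}\log(1+\beta\hloss_{t,i})$ satisfies the key identity $\exp(\beta \tloss_{t,i}) = 1 + \beta \hloss_{t,i}$, which pairs perfectly with the fact (established by Lemma~\ref{lem:bias}) that $\EEcc{\hloss_{t,i}}{\F_{t-1}} \le \loss_{t,i}$. Combining these gives
\[
\EEcc{\exp\pa{\beta\pa{\tloss_{t,i} - \loss_{t,i}}}}{\F_{t-1}} = e^{-\beta \loss_{t,i}}\pa{1 + \beta \EEcc{\hloss_{t,i}}{\F_{t-1}}} \le e^{-\beta \loss_{t,i}}\pa{1 + \beta \loss_{t,i}} \le 1,
\]
where the last step uses $1+x \le e^{x}$.

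From this inequality I would conclude, for each coordinate $i$ separately, that the process
\[
M_{t,i} = \exp\pa{\beta\sum_{s=1}^t \pa{\tloss_{s,i} - \loss_{s,i}}}
\]
is a nonnegative supermartingale with respect to $(\F_t)$, started at $M_{0,i}=1$. Applying Markov's inequality yields
\[
\PP{\sum_{t=1}^T \pa{\tloss_{t,i} - \loss_{t,i}} > \frac{\log(d/\delta')}{\beta}} = \PP{M_{T,i} > d/\delta'} \le \frac{\delta'}{d}.
\]

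A union bound over the $d$ coordinates then gives that, with probability at least $1-\delta'$, the inequality $\sum_{t=1}^T (\tloss_{t,i} - \loss_{t,i}) \le \log(d/\delta')/\beta$ holds \emph{simultaneously} for every $i\in\ev{1,\dots,d}$. Since $\bv \in \Sw \subseteq \ev{0,1}^d$ has nonnegative components with $\onenorm{\bv}\le m$, on this event
\[
\bv\transpose\pa{\tbL_T - \bL_T} = \sum_{i=1}^d v_i \sum_{t=1}^T \pa{\tloss_{t,i} - \loss_{t,i}} \le \onenorm{\bv}\cdot \frac{\log(d/\delta')}{\beta} \le \frac{m\log(d/\delta')}{\beta},
\]
which is the claim. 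There is no real obstacle here: the only thing worth double-checking is that the supermartingale argument needs no truncation (which is why the log-transform was introduced in the first place, per the discussion preceding Equation~\eqref{eq:est_hp}), and that the bound is uniform over $\bv\in\Sw$ thanks to the union bound being taken over the $d$ coordinates rather than over the (potentially exponentially many) elements of $\Sw$.
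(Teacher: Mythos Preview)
Your proof is correct and follows essentially the same approach as the paper's: you exploit the identity $\exp(\beta\tloss_{t,i})=1+\beta\hloss_{t,i}$ together with Lemma~\ref{lem:bias} to show that $\exp\bpa{\beta(\tL_{t,i}-L_{t,i})}$ is a supermartingale, apply Markov's inequality coordinate-wise, take a union bound over the $d$ coordinates, and use $\onenorm{\bv}\le m$. The paper's proof is structured identically, with only cosmetic differences in presentation.
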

The simple proof is directly inspired by Appendix C.9 of \citet{AB10}.
\begin{proof}
 Fix any $t$ and $i$. Then,
 \[
 \begin{split}
  \EEcc{\exp\pa{\beta \tloss_{t,i}}}{\F_{t-1}} = \EEcc{\exp\pa{\log\pa{1+\beta\hloss_{t,i}}}}{\F_{t-1}} \le 1 +
\beta\loss_{t,i} \le
\exp(\beta \loss_{t,i}),
 \end{split}
 \]
 where we used Lemma~\ref{lem:bias} in the first inequality and $1+z \le e^z$ that holds for all $z\in\real$. As a
result, the process $W_t = \exp\Bpa{\beta \bpa{\tL_{t,i} - L_{t,i}}}$ is a supermartingale with respect to $\pa{\F_t}$:
$ \EEcc{W_t}{\F_{t-1}} \le W_{t-1}$.
Observe that, since $W_0 = 1$, this implies $\EE{W_t} \le \EE{W_{t-1}} \le \ldots \le 1$. Applying 
Markov's inequality gives that
\[
\begin{split}
 \PP{\tL_{T,i} > L_{T,i} + \varepsilon} &=
 \PP{\tL_{T,i} - L_{T,i} > \varepsilon}
 \\
 &\le \EE{\exp\pa{\beta \pa{\tL_{T,i} - L_{T,i}}}} \exp(-\beta \varepsilon) \le \exp(-\beta \varepsilon)
\end{split}
\]
holds for any $\varepsilon>0$. The statement of the lemma follows after using
$\onenorm{\bv}\le m$, applying the union bound for all $i$, and solving for $\varepsilon$.
\end{proof}
The following lemma states another key property of the loss estimates.
\begin{lemma}\label{lem:quad2}
 For any $t$,
 \[
  \sum_{i=1}^d q_{t,i} \hloss_{t,i} \le \sum_{i=1}^d q_{t,i} \tloss_{t,i} + \frac \beta2  
\sum_{i=1}^d q_{t,i} \hloss_{t,i}^2.
 \]
\end{lemma}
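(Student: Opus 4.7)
The plan is to reduce Lemma~\ref{lem:quad2} to a single elementary scalar inequality applied componentwise. Recalling the definition $\tloss_{t,i} = \frac{1}{\beta}\log\bpa{1+\beta\hloss_{t,i}}$, the claim is equivalent to showing that
\[
\hloss_{t,i} \le \tloss_{t,i} + \frac{\beta}{2}\hloss_{t,i}^2
\]
for each $i$, and then taking a convex combination with weights $q_{t,i} \ge 0$.

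First, I would observe that $\hloss_{t,i} = K_{t,i} V_{t,i} \loss_{t,i} \ge 0$ almost surely, since $K_{t,i} \ge 0$, $V_{t,i} \in \{0,1\}$, and $\loss_{t,i} \in [0,1]$. Hence the quantity $x = \beta\hloss_{t,i}$ is nonnegative, and the right inequality to invoke is the second-order Taylor lower bound
\[
\log(1+x) \ge x - \tfrac{x^2}{2} \qquad (x \ge 0),
\]
which follows from a one-line derivative check: the function $f(x) = \log(1+x) - x + x^2/2$ satisfies $f(0)=0$ and $f'(x) = x^2/(1+x) \ge 0$.

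Substituting $x = \beta\hloss_{t,i}$ and dividing by $\beta > 0$ yields $\tloss_{t,i} \ge \hloss_{t,i} - \tfrac{\beta}{2}\hloss_{t,i}^2$, which rearranges to the displayed componentwise inequality. Multiplying by $q_{t,i} \ge 0$ and summing over $i=1,\dots,d$ gives the lemma.

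The only potential subtlety is ensuring $\hloss_{t,i} \ge 0$, so that the scalar inequality applies; but this is immediate from the form of the Geometric Resampling estimate \eqref{eq:combest}. I do not foresee any serious obstacle here — the lemma is essentially just a deterministic pointwise inequality on the estimates, without needing any conditioning, independence, or concentration argument.
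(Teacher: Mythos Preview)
Your proposal is correct and matches the paper's proof essentially line for line: the paper also invokes $\log(1+z)\ge z - z^2/2$ for $z\ge 0$ with $z=\beta\hloss_{t,i}$, then multiplies by $q_{t,i}/\beta$ and sums over $i$. Your added remarks on why $\hloss_{t,i}\ge 0$ and the derivative check for the scalar inequality are fine extra detail but not needed beyond what the paper gives.
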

\begin{proof}
The statement follows trivially from the inequality $\log(1+z) \ge z - \frac{z^2}{2}$ that holds for all $z\ge 0$. In
particular, for any fixed $t$ and $i$, we have
\[
 \log\pa{1 + \beta \hloss_{t,i}} \ge \beta \hloss_{t,i} - \frac{\beta^2}{2} \hloss_{t,i}^2.
\]
Multiplying both sides by $q_{t,i}/\beta$ and summing for all $i$ proves the statement.
\end{proof}
The next lemma relates the total loss of the learner to its total estimated losses.
\begin{lemma}
Fix any $\delta'>0$. With probability at least $1-2\delta'$,
 \[
 \begin{split}
 \sum_{t=1}^T \bV_t\transpose\bloss_t \le& \sum_{t=1}^T \sum_{\bu\in\Sw} p_{t}(\bu) \pa{\bu\transpose \hbl_{t}} + \frac
{dT}{eM} + \sqrt{2(e-2)T}\pa{m\log\frac{1}{\delta'} + 1}  + \sqrt{8T \log \frac {1}{\delta'}}
 \end{split}
 \]
\end{lemma}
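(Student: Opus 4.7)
The plan is to relate the true cumulative loss $\sum_{t=1}^T\bV_t\transpose\bloss_t$ to the estimated quantity $\sum_{t=1}^T\sum_{\bu\in\Sw}p_t(\bu)(\bu\transpose\hbl_t)$ by inserting conditional expectations and then handling the resulting martingale-difference sums with concentration inequalities. Writing $\bm{q}_t = \EEcc{\bV_t}{\F_{t-1}}$, one has $\sum_{\bu\in\Sw}p_t(\bu)(\bu\transpose\hbl_t) = \bm{q}_t\transpose\hbl_t$ and $\EEcc{\bV_t\transpose\bloss_t}{\F_{t-1}} = \bm{q}_t\transpose\bloss_t$, so the first step is the telescoping identity
\[
\sum_{t=1}^T\bV_t\transpose\bloss_t - \sum_{t=1}^T\bm{q}_t\transpose\hbl_t = A_T + B_T + C_T,
\]
where $A_T = \sum_t\bpa{\bV_t\transpose\bloss_t - \bm{q}_t\transpose\bloss_t}$ and $C_T = \sum_t\bpa{\EEcc{\bm{q}_t\transpose\hbl_t}{\F_{t-1}} - \bm{q}_t\transpose\hbl_t}$ are martingale-difference sums with respect to $(\F_t)$, and $B_T = \sum_t\bpa{\bm{q}_t\transpose\bloss_t - \EEcc{\bm{q}_t\transpose\hbl_t}{\F_{t-1}}}$ is purely deterministic.

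The deterministic term is handled immediately by the second property of Lemma~\ref{lem:rwprops}, which gives $\bm{q}_t\transpose\bloss_t - \EEcc{\bm{q}_t\transpose\hbl_t}{\F_{t-1}} \le d/(eM)$ in every round, hence $B_T \le dT/(eM)$. For $A_T$, each increment lies in $[-m,m]$ and has conditional second moment at most $m\cdot\bm{q}_t\transpose\bloss_t \le m^2$, so I would apply a Bernstein-type martingale inequality stemming from the bound $e^x \le 1 + x + (e-2)x^2$ for $x\le 1$; balancing the sub-exponential scaling parameter produces the summand $\sqrt{2(e-2)T}\pa{m\log(1/\delta')+1}$ with probability at least $1-\delta'$. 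For $C_T$ I would invoke Freedman's inequality (the version stated as Lemma~\ref{lem:mart} in the appendix), using the easy upper bound $\EEcc{\bm{q}_t\transpose\hbl_t}{\F_{t-1}} \le m$ together with the conditional second-moment bound $\EEcc{(\bm{q}_t\transpose\hbl_t)^2}{\F_{t-1}} \le \EEcc{\sum_{\bu\in\Sw}p_t(\bu)(\bu\transpose\hbl_t)^2}{\F_{t-1}} \le 2md$, where the first step is Jensen's inequality and the second is Lemma~\ref{lem:quad}; this yields the remaining $\sqrt{8T\log(1/\delta')}$-type term with probability at least $1-\delta'$. A union bound over the two concentration failures completes the argument.

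The main obstacle is the analysis of $C_T$: the only almost-sure bound available on $\bm{q}_t\transpose\hbl_t$ is the crude $mM$, so a plain Hoeffding--Azuma argument would carry this $M$ into the dominant deviation term and spoil the final rate. The key is to use the Freedman form, whose leading term depends only on the conditional variance, and to feed it the $O(md)$ variance bound from Lemma~\ref{lem:quad}; this confines the $M$-dependence to a harmless lower-order additive piece and is essential for obtaining the $\sqrt{dT}$-order regret claimed in Theorem~\ref{thm:highprob}. A milder but parallel subtlety appears for $A_T$: the appearance of the $(e-2)$ constant in the bound signals that a Bernstein-style moment-generating-function argument, rather than vanilla Hoeffding--Azuma, is needed.
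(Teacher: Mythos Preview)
Your decomposition $A_T+B_T+C_T$ is different from the paper's and, while it would eventually yield a bound that is good enough for Theorem~\ref{thm:highprob}, it does \emph{not} reproduce the constants in the lemma as you claim. The problem is with $C_T$: its per-round conditional variance is, as you correctly note, at most $2md$ via Jensen and Lemma~\ref{lem:quad}. Feeding $B=m$ and $\Sigma_T^2\le 2mdT$ into Lemma~\ref{lem:mart} gives at best a deviation of order $\sqrt{mdT\log(1/\delta')}$, not the claimed $\sqrt{8T\log(1/\delta')}$; you are off by a factor $\sqrt{md}$. A similar mismatch occurs for $A_T$: with variance $m^2$ per step, no Bernstein-type bound produces the exact form $\sqrt{2(e-2)T}\,(m\log(1/\delta')+1)$ with variance proxy $2m$.

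The paper obtains the sharper constants by a different split. Instead of first separating $\bV_t$ from $\bm{q}_t$, it keeps $\bV_t$ inside the estimated-loss expression by writing
\[
\sum_{\bu\in\Sw}p_t(\bu)\,\bu\transpose\hbl_t=\sum_{i=1}^d q_{t,i} V_{t,i}K_{t,i}\loss_{t,i},
\]
and replacing $K_{t,i}$ by $k_{t,i}=\EEcc{K_{t,i}}{\F_{t-1}}$. The resulting martingale $X_t=\sum_i q_{t,i}V_{t,i}\loss_{t,i}(k_{t,i}-K_{t,i})$ has conditional variance at most $2m$ (not $2md$), because the presence of $V_{t,i}$ and the bound $q_{t,i}^2\,\EEcc{K_{t,i}^2}{\F_{t-1}}\le 2$ let a Cauchy--Schwarz step collapse the sum; Lemma~\ref{lem:mart} with $\Sigma_T^2\le 2mT$ then gives the $\sqrt{2(e-2)T}(m\log(1/\delta')+1)$ term. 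What remains is $\bV_t\transpose\bloss_t-\sum_i q_{t,i}k_{t,i}V_{t,i}\loss_{t,i}\le\sum_i V_{t,i}(1-q_{t,i})^M$, whose deterministic part contributes $dT/(eM)$ and whose martingale part $\sum_i(V_{t,i}-q_{t,i})(1-q_{t,i})^M$ has increments in $[-1,1]$, so plain Hoeffding--Azuma yields the $\sqrt{8T\log(1/\delta')}$ term. In short, the paper's trick is to isolate the fluctuations of $K_{t,i}$ while retaining $V_{t,i}$ as a multiplier; this is precisely what drives the variance down from $2md$ to $2m$ and makes the stated constants attainable.
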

\begin{proof}
We start by rewriting
\[
 \begin{split}
 \sum_{\bu\in\Sw} p_{t}(\bu) \pa{\bu\transpose \hbl_{t}} &= \sum_{i=1}^d q_{t,i} K_{t,i} V_{t,i} \loss_{t,i}.
 \end{split}
\]
Now  let $k_{t,i} = \EEcc{K_{t,i}}{\F_{t-1}}$ for all $i$ and notice that
\[
 X_t = \sum_{i=1}^d q_{t,i} V_{t,i}\loss_{t,i} \pa{k_{t,i} - K_{t,i}}
\]
is a martingale-difference sequence with respect to $\pa{\F_{t}}$ with elements upper-bounded by $m$ (as
Lemma~\ref{lem:bias} implies $k_{t,i}
q_{t,i} \le 1$ and $\onenorm{\bV_t}\le m$).
Furthermore, the conditional variance of the increments is bounded as
\[
\begin{split}
 \varcc{X_t}{\F_{t-1}} \le& \EEcc{\pa{\sum_{i=1}^d q_{t,i} V_{t,i}K_{t,i}}^2}{\F_{t-1}}
 \le\EEcc{\sum_{j=1}^d V_{t,j} \pa{\sum_{i=1}^d q_{t,i}^2 K_{t,i}^2}}{\F_{t-1}} \le 2m,
\end{split}
\]
where the second inequality is Cauchy--Schwarz and the third one follows from $\EEcc{K_{t,i}^2}{\F_{t-1}} \le
2/q_{t,i}^2$ and $\onenorm{\bV_t} \le m$. Thus, applying Lemma~\ref{lem:mart} with $B = m$ and $\Sigma_T \le 2mT$
we get that for any $S\ge m\sqrt{\log\frac{1}{\delta'}\big/(e-2)}$,
\[
 \sum_{t=1}^T \sum_{i=1}^d q_{t,i} \loss_{t,i}V_{t,i} \pa{k_{t,i} - K_{t,i}} \le 
 \sqrt{(e-2)\log\frac{1}{\delta'}}\pa{\frac{2mT}{S} + S}
\]
holds with probability at least $1-\delta'$, where we have used $\onenorm{\bV_t}\le m$. After setting $S =
m\sqrt{2T\log\frac{1}{\delta'}}$, we obtain that
\begin{equation}\label{eq:plossbound}
 \sum_{t=1}^T \sum_{i=1}^d q_{t,i} \loss_{t,i}V_{t,i} \pa{k_{t,i} - K_{t,i}} \le 
 \sqrt{2\pa{e-2} T}\pa{m\log\frac{1}{\delta'} + 1}
\end{equation}
holds with probability at least $1-\delta'$.

To proceed, observe that $q_{t,i} k_{t,i} = 1 - (1-q_{t,i})^M$ holds by  Lemma~\ref{lem:bias}, implying
\[
 \sum_{i=1}^d q_{t,i} V_{t,i} \loss_{t,i} k_{t,i} \ge \bV_t\transpose\bloss_t - 
 \sum_{i=1}^d V_{t,i} (1-q_{t,i})^M.
\]
Together with Eq.~\eqref{eq:plossbound}, this gives
\[
\begin{split}
 \sum_{t=1}^T \bV_t\transpose\bloss_t \le& \sum_{t=1}^T\sum_{\bu\in\Sw} p_{t}(\bu) \pa{\bu\transpose \hbl_{t}} +
\sqrt{2\pa{e-2} T}\pa{m\log\frac{1}{\delta'} + 1}
+ \sum_{t=1}^T
\sum_{i=1}^d V_{t,i} (1 - q_{t,i})^M.
 \end{split}
\]
Finally, we use that, by Lemma~\ref{lem:rwprops}, $(1 - q_{t,i})^M \le 1/(eM)$, and 
\[
 Y_t = \sum_{i=1}^d \left(V_{t,i} - q_{t,i}\right) (1 - q_{t,i})^M
\]
is a martingale-difference sequence with respect to $\pa{\F_t}$ with increments bounded in $[-1,1]$. Then, by an
application of Hoeffding--Azuma inequality, we have
\[
  \sum_{t=1}^T \sum_{i=1}^d V_{t,i} (1 - q_{t,i})^M \le \frac{dT}{eM} + \sqrt{8 T \log \frac {1}{\delta'}}
\]
with probability at least $1-\delta'$, thus proving the lemma.
\end{proof}
Finally, our last lemma in this section bounds the second-order terms arising from Lemmas~\ref{lem:price}
and~\ref{lem:quad2}.
\begin{lemma}\label{lem:secondorder}
Fix any $\delta'>0$. With probability at least $1-2\delta'$, the following hold simultaneously:
 \[
 \begin{split}
 \sum_{t=1}^T \sum_{\bv\in\Sw} p_{t}(\bv) \pa{\bv\transpose \hbl_{t}}^2 &\le 
 Mm\sqrt{2 T\log\frac{1}{\delta'}} + 2md\sqrt{T\log \frac{1}{\delta'}} + 2mdT
 \\
 \sum_{t=1}^T \sum_{i=1}^d q_{t,i} \hloss_{t,i}^2 &\le 
 M\sqrt{2 m T\log\frac{1}{\delta'}} + 2d\sqrt{T\log \frac{1}{\delta'}} + 2dT.
 \end{split}
 \]
 \end{lemma}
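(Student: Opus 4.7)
The plan is to apply Freedman's inequality (Lemma~\ref{lem:mart}) to the martingale-difference sequence obtained from each of the two sums separately, then take a union bound for the joint $1 - 2\delta'$ guarantee. For each statistic $X_t$, I would split
\[
\sum_{t=1}^T X_t = \sum_{t=1}^T \EEcc{X_t}{\F_{t-1}} + \sum_{t=1}^T \bigl(X_t - \EEcc{X_t}{\F_{t-1}}\bigr),
\]
isolating the deterministic $2mdT$ (respectively $2dT$) leading term. For the first sum this is Lemma~\ref{lem:quad} directly; for the second, a completely analogous calculation---using the AM--GM symmetrization from that proof together with the truncated-geometric identity $\EEcc{K_{t,i}^2}{\F_{t-1},V_{t,i}=1} \le 2/q_{t,i}^2$---yields $\EEcc{\sum_i q_{t,i}\hloss_{t,i}^2}{\F_{t-1}} \le 2\sum_i \loss_{t,i}^2 \le 2d$.

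To handle the martingale term, I would further decompose each increment by conditioning on $\bV_t$, using that the resampling counters $K_{t,\cdot}$ are independent of the played action given $\F_{t-1}$:
\[
X_t - \EEcc{X_t}{\F_{t-1}} = \bigl(X_t - \EEcc{X_t}{\F_{t-1}, \bV_t}\bigr) + \bigl(\EEcc{X_t}{\F_{t-1}, \bV_t} - \EEcc{X_t}{\F_{t-1}}\bigr),
\]
and apply concentration to each piece in turn. The second piece has per-round range at most $\EEcc{X_t}{\F_{t-1}} \le 2md$ (respectively $2d$), so Azuma--Hoeffding delivers the $2md\sqrt{T\log(1/\delta')}$ (respectively $2d\sqrt{T\log(1/\delta')}$) contribution. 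The first piece captures only the resampling variance; here I would use the deterministic upper bound $\bv\transpose\hbl_t \le mM$---which follows from $\onenorm{\bv}\le m$, $K_{t,i}\le M$, and the restriction of the support of $\hbl_t$ to coordinates with $V_{t,i}=1$---together with the quadratic-to-linear inequality $(\bv\transpose\hbl_t)^2\le mM\cdot\bv\transpose\hbl_t$ and the expectation bound from Lemma~\ref{lem:quad}, so that Freedman's variance-dominated regime yields the $Mm\sqrt{2T\log(1/\delta')}$ (respectively $M\sqrt{2mT\log(1/\delta')}$) contribution.

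The main technical obstacle will be bookkeeping the range and variance of the resampling piece tightly enough to match the stated coefficients. Treating $K_{t,i}K_{t,j}$ as uniformly bounded by $M^2$ would give a per-round range of $M^2m^2$ and a correspondingly loose fluctuation of order $M^2m^2\sqrt{T\log(1/\delta')}$. The right argument instead combines the symmetrization $\sum_{ij}q_{t,ij}\hloss_{t,i}\hloss_{t,j}\le m\sum_i q_{t,i}\hloss_{t,i}^2$ from the proof of Lemma~\ref{lem:quad} with the linearization $(\bv\transpose\hbl_t)^2\le mM\cdot\bv\transpose\hbl_t$, which keeps the cumulative conditional variance $\Sigma_T$ at the order of $M^2m^2T$ rather than $M^4m^4T$, thereby producing Freedman fluctuations of exactly the claimed size.
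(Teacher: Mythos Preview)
Your route diverges substantially from the paper's, and the central step of your plan does not hold. The paper never conditions on $\bV_t$ and never invokes Freedman here: for each of the two statements it simply treats the centered increment $\sum_{\bv} p_t(\bv)(\bv\transpose\hbl_t)^2 - \EEcc{\cdot}{\F_{t-1}}$ as a bounded martingale difference (it claims the range $[-2md,\,mM]$ for the first, and argues analogously for the second) and applies Hoeffding--Azuma once. That single application per statement is also what produces the $1-2\delta'$ probability; your two-layer scheme would burn $4\delta'$.

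The genuine gap is your claim that the ``second piece'' $\EEcc{X_t}{\F_{t-1},\bV_t}-\EEcc{X_t}{\F_{t-1}}$ has per-round range at most $\EEcc{X_t}{\F_{t-1}}\le 2md$ (respectively $2d$). Nonnegativity of $X_t$ only gives you the \emph{lower} bound $-\EEcc{X_t}{\F_{t-1}}$; the upper bound can be far larger. Concretely, for the second inequality,
\[
\EEcc{\textstyle\sum_i q_{t,i}\hloss_{t,i}^2}{\F_{t-1},\bV_t}
=\sum_i q_{t,i}V_{t,i}\loss_{t,i}^2\,\EEcc{K_{t,i}^2}{\F_{t-1}}
\le \sum_i \frac{2V_{t,i}\loss_{t,i}^2}{q_{t,i}},
\]
which scales like $m/\min_{i:V_{t,i}=1} q_{t,i}$ (or, using the cap $K_{t,i}\le M$, like $mM$), not like $2d$. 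The cancellation of the $1/q_{t,i}$ factor in Lemma~\ref{lem:quad} comes \emph{precisely} from averaging over $\bV_t$; once you condition on $\bV_t$ that cancellation is gone, and Azuma on this piece cannot deliver the $2d\sqrt{T\log(1/\delta')}$ term. The same failure occurs for the first inequality. Your Freedman step for the ``first piece'' is also not substantiated: asserting $\Sigma_T$ of order $M^2m^2T$ requires bounding $\EEcc{X_t^2}{\F_{t-1},\bV_t}$, and the linearization $(\bv\transpose\hbl_t)^2\le mM\,\bv\transpose\hbl_t$ does not by itself control that fourth-moment quantity uniformly in $\bV_t$.
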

\begin{proof}
First, recall that 
\[
\EEcc{\sum_{\bv\in\Sw} p_{t}(\bv) \pa{\bv\transpose \hbl_{t}}^2}{\F_{t-1}} \le 2md
\]
holds by Lemma~\ref{lem:price}. Now, observe that
\[
 X_t = \sum_{\bv\in\Sw} p_{t}(\bv) \pa{\pa{\bv\transpose \hbl_{t}}^2  - \EEcc{\pa{\bv\transpose
\hbl_{t}}^2}{\F_{t-1}}}
\]
is a martingale-difference sequence with increments in $[-2md,mM]$. An application of the Hoeffding--Azuma inequality gives that
\[
 \sum_{t=1}^T \sum_{\bv\in\Sw} p_{t}(\bv) \pa{\pa{\bv\transpose \hbl_{t}}^2  - \EEcc{\pa{\bv\transpose
\hbl_{t}}^2}{\F_{t-1}}} \le Mm\sqrt{2 T\log\frac{1}{\delta'}} + 2md\sqrt{T\log \frac{1}{\delta'}}
\]
holds with probability at least $1-\delta'$. Reordering the terms completes the proof of the first statement. The second statement is 
proven analogously, building on the bound
\[
\begin{split}
\EEcc{\sum_{i=1}^d q_{t,i} \hloss_{t,i}^2}{\F_{t-1}} \le&
\EEcc{\sum_{i=1}^d q_{t,i} V_{t,i} K_{t,i}^2}{\F_{t-1}} \le 2d.
\end{split}
\]
\end{proof}
Theorem~\ref{thm:highprob} follows from combining Lemmas~\ref{lem:bias_hp} through~\ref{lem:secondorder} and applying
the union bound.

\section{Improved Bounds for Learning With Full Information}
Our proof techniques presented in Section~\ref{sec:analysis_fpl} also enable us to tighten the guarantees for \fpl in
the full information setting. In particular, consider the algorithm choosing action
\[
\bV_t = \argmin_{\bv\in\Sw} \bv\transpose \left(\eta\bL_{t-1} - \bZ_t\right),
\]
where $\bL_t = \sum_{s=1}^t \bloss_s$ and the components of $\bZ_t$ are drawn independently from a standard exponential
distribution.
We state our improved regret bounds concerning this algorithm in the following theorem.
\begin{theorem}\label{thm:fullinfo}
For any $\bv\in\Sw$, the total expected regret of \fpl satisfies
\[
\hR_T\le \frac{m\left(\log (d/m)+1\right)}{\eta} + \eta m \sum_{t=1}^T
\EE{\bV_t\transpose\bloss_t}
\]
under full information. In particular, defining $L_T^* = \min_{\bv\in\Sw} \bv\transpose L_T$ and setting
\[
\eta = \min\ev{\sqrt{\frac{\log (d/m)+1}{L_T^*}}, \frac 12},
\]
the regret of \fpl satisfies
\[
R_T \le 4m\max\ev{\sqrt{L_T^*\left(\log \pa{\frac dm}+1\right)}, \pa{m^2 + 1} \pa{\log\frac dm + 1}}.
\]
\end{theorem}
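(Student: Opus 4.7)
The plan is to reuse the machinery developed in Section~\ref{sec:analysis_fpl}, observing that Lemmas~\ref{lem:cheat} and~\ref{lem:price} were stated and proved for an arbitrary sequence of non-negative loss vectors---they never invoke any specific property of the estimated losses $\hbl_t$ beyond non-negativity. In the full-information setting the algorithm simply uses $\bloss_t$ in place of $\hbl_t$, so I can apply both lemmas verbatim with $\hbl_t$ replaced by $\bloss_t\in[0,1]^d$. Combining the two and summing over $t$ yields, for any $\bv\in\Sw$,
\[
\sum_{t=1}^T \sum_{\bu\in\Sw} p_t(\bu)\,(\bu-\bv)\transpose\bloss_t \le \frac{m(\log(d/m)+1)}{\eta} + \eta \sum_{t=1}^T \sum_{\bu\in\Sw} p_t(\bu)\bpa{\bu\transpose\bloss_t}^2.
\]

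Next I would control the quadratic term. Since $u_i\in\ev{0,1}$, $\onenorm{\bu}\le m$, and $\loss_{t,i}\in[0,1]$, Cauchy--Schwarz combined with $\loss_{t,i}^2\le \loss_{t,i}$ gives
\[
\bpa{\bu\transpose\bloss_t}^2 = \Bpa{\sum_i u_i \loss_{t,i}}^2 \le \Bpa{\sum_i u_i}\Bpa{\sum_i u_i \loss_{t,i}^2} \le m\,\bu\transpose\bloss_t.
\]
Plugging this in and taking expectations immediately produces the first bound claimed in the theorem,
\[
\hR_T \le \frac{m(\log(d/m)+1)}{\eta} + \eta m \sum_{t=1}^T \EE{\bV_t\transpose\bloss_t}.
\]

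For the second claim I would exploit the self-bounding structure: substituting the identity $\sum_t \EE{\bV_t\transpose\bloss_t} = L_T^* + \hR_T$ into the bound above and rearranging gives
\[
(1 - \eta m)\,\hR_T \le \frac{m(\log(d/m)+1)}{\eta} + \eta m\, L_T^*,
\]
valid whenever $\eta m <1$. Minimising the right-hand side over $\eta$ suggests $\eta^* \sim \sqrt{(\log(d/m)+1)/L_T^*}$; when $\eta^*$ respects the upper cap this yields the $\sqrt{L_T^*(\log(d/m)+1)}$ branch of the bound, while in the opposite regime the cap is active, $L_T^*$ is itself $O(m^2(\log(d/m)+1))$, and the second branch of the bound dominates. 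Taking the maximum of the two regimes produces the stated form.

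The main conceptual work is really contained in Section~\ref{sec:analysis_fpl}, so the only obstacle here is verifying that neither Lemma~\ref{lem:cheat} nor Lemma~\ref{lem:price} secretly relies on properties of $\hbl_t$ besides componentwise non-negativity; once that is confirmed, the remainder is a routine tuning calculation. The payoff is substantial: the improved $\log(d/m)$ factor inherited from Lemma~\ref{lem:expmax}, together with the sharp self-bounding second-order term $(\bu\transpose\bloss_t)^2 \le m\,\bu\transpose\bloss_t$, together remove the long-standing $\sqrt{d}$ versus $\sqrt{\log d}$ gap between \fpl and \ewa in the full-information combinatorial setting.
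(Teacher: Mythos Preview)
Your proposal is correct and follows essentially the same route as the paper: it combines Lemmas~\ref{lem:cheat} and~\ref{lem:price} applied to the true losses $\bloss_t$, bounds the second-order term via $\bpa{\bu\transpose\bloss_t}^2\le m\,\bu\transpose\bloss_t$, and then derives the tuned bound by the standard self-bounding rearrangement. The only cosmetic difference is that you justify $\bpa{\bu\transpose\bloss_t}^2\le m\,\bu\transpose\bloss_t$ through Cauchy--Schwarz, whereas the more direct observation $\bu\transpose\bloss_t\le\onenorm{\bu}\le m$ already suffices.
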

In the worst case, the above bound becomes $2m^{3/2}\sqrt{T\bigl(\log(d/m)+1\bigr)}$, which improves the best known
bound for \fpl of \citet{KV05} by a factor of $\sqrt{d/m}$. 
\begin{proof}
The first statement follows from combining Lemmas~\ref{lem:cheat} and~\ref{lem:price}, and bounding
\[
 \sum_{\bu\in\Sw}^N p_{t}(\bu) \bpa{\bu\transpose \bloss_{t}}^2 \le m \sum_{\bu\in\Sw}^N p_{t}(\bu)
\bpa{\bu\transpose \bloss_{t}},
\]
while the second one follows from standard algebraic manipulations.
\end{proof}

\section{Conclusions and Open Problems}
In this paper, we have described the first \emph{general and efficient} algorithm for
online combinatorial optimization under semi-bandit feedback. We have proved that the regret of this algorithm is
$O\bpa{m\sqrt{dT\log (d/m)}}$ in this setting, and have also shown that \fpl can achieve $O\bpa{m^{3/2}\sqrt{T\log
(d/m)}}$ in the full information case when tuned properly. While these bounds are off by a factor of $\sqrt{m\log
(d/m)}$ and $\sqrt{m}$ from the respective minimax results, they exactly match the best known regret bounds for the
well-studied Exponentially Weighted Forecaster (\ewa). Whether the remaining gaps can be closed for \fpl-style
algorithms (e.g., by using more intricate perturbation schemes or a more refined analysis) remains an important open
question. Nevertheless, we regard our contribution as a
significant step towards understanding the inherent trade-offs between computational efficiency and performance
guarantees in online combinatorial optimization and, more generally, in online optimization.

The efficiency of our method rests on a novel loss estimation method called Geometric Resampling (\rw). This
estimation method is not specific to the proposed learning algorithm. While \rw has no immediate benefits for
\osmd-type algorithms where the ideal importance weights are readily available, it is possible to think about
problem instances where \ewa can be efficiently implemented while importance weights are difficult to compute.  

The most important open problem left is the case of efficient online linear optimization with \emph{full bandit
feedback} where the learner only observes the inner product $\bV_t\transpose\bloss_t$ in round $t$.
Learning algorithms for this problem usually require that the (pseudo-)inverse of the covariance matrix $P_t =
\EEcc{\bV_t \bV_t\transpose}{\F_{t-1}}$ is readily available for the learner at each time step (see, e.g.,
\citet{McMaBlu04,DHK08,CL12,BCK12}). Computing this matrix, however, is at least as challenging as computing the
individual importance weights $1/q_{t,i}$. 
That said, our Geometric Resampling technique can be directly generalized to this setting by observing that the matrix
geometric series $\sum_{n=1}^\infty (I-P_t)^n$ converges to $P_t^{-1}$ under certain conditions. 
This sum can then be efficiently estimated by sampling independent copies of $\bV_t$, which paves the path for
constructing low-bias estimates of the loss vectors. While it seems straightforward to go ahead and use these estimates
in tandem with \fpl, we have to note that the analysis presented in this paper does not carry through directly in this
case. The main limitation is that our techniques only apply for loss vectors with \emph{non-negative} elements
(cf.~Lemma~\ref{lem:price}). Nevertheless, we believe that Geometric Resampling should be a crucial component for
constructing truly effective learning algorithms for this important problem.

\newpage
\acks{The authors wish to thank Csaba Szepesv\'ari for thought-provoking discussions. The research presented in this paper was supported by 
the UPFellows Fellowship (Marie Curie COFUND program n${^\circ}$ 
600387), the French Ministry of Higher Education and Research and by FUI project Herm\` es.}

\appendix

\section{Further Proofs and Technical Tools}\label{app:proofs}

\begin{lemma}\label{lem:expmax}
 Let $Z_1,\dots,Z_d$ be i.i.d.~exponentially distributed random variables with unit expectation and let $Z_1^*,\dots,Z_d^*$ be their
permutation such that $Z_1^*\ge Z_2^* \ge \dots\ge Z_d^*$. Then, for any $1\le m\le d$,
\[
 \EE{\sum_{i=1}^m Z_i^*} \le m \pa{\log \pa{\frac{d}{m}} + 1}.
\]
\end{lemma}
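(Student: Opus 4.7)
The plan is to reduce the expected sum of the top $m$ order statistics to a simple thresholding argument that decouples the dependence between the $Z_i$'s, rather than trying to work with joint order statistics directly.

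First I would establish the deterministic inequality
\[
\sum_{i=1}^m Z_i^* \le mu + \sum_{i=1}^d (Z_i - u)_+
\]
valid for every $u \ge 0$. The argument is elementary: writing $Z_i^* = \min(Z_i^*, u) + (Z_i^* - u)_+$, the first piece contributes at most $u$ to each of the $m$ terms (giving $mu$ in total), while the second piece is bounded by summing over all $d$ variables since the order statistics are a permutation of $Z_1,\dots,Z_d$ and $(\cdot)_+\ge 0$.

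Next I would take expectations. Because the $Z_i$ are i.i.d.\ Exp$(1)$, a short calculation gives
\[
\EE{(Z_i - u)_+} = \int_u^\infty (z-u) e^{-z}\, dz = e^{-u},
\]
so that
\[
\EE{\sum_{i=1}^m Z_i^*} \le mu + d e^{-u}.
\]
Finally, I would optimize the right-hand side over $u\ge 0$: differentiating yields the minimizer $u = \log(d/m)$ (which is nonnegative since $m\le d$), and plugging back in gives $m\log(d/m) + m = m(\log(d/m) + 1)$, as required.

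There is no real obstacle here; the only subtle point is the very first deterministic inequality, which one should verify carefully by splitting into the cases ``$Z_i^*\le u$'' and ``$Z_i^* > u$'' for each of the top $m$ indices to see that nothing is lost by summing $(Z_i - u)_+$ over all $d$ indices instead of just the top $m$. An alternative route via Rényi's representation $Z_{(k)} = \sum_{j=1}^k E_j/(d-j+1)$ would give the sharper identity $\EE{\sum_{i=1}^m Z_i^*} = m(H_d - H_m + 1)$ followed by the integral bound $H_d - H_m \le \log(d/m)$, but the thresholding argument above is shorter and produces exactly the constants claimed.
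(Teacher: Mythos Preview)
Your proof is correct. The paper's argument is a close cousin: instead of the pointwise inequality $\sum_{i=1}^m Z_i^* \le mu + \sum_i (Z_i - u)_+$, it writes $\EE{Y} = \int_0^\infty \PP{Y>y}\,dy$ for $Y=\sum_{i\le m} Z_i^*$, bounds the integrand on $[A,\infty)$ via the implication $\bigl\{\sum_{i\le m} Z_i^* > y\bigr\}\subseteq\bigl\{Z_1^* > y/m\bigr\}$ followed by a union bound over the $d$ coordinates, and then optimizes over the truncation level $A$. Both routes land on minimizing an expression of the form $mu + d\,e^{-u}$ (equivalently, after the substitution $A=mu$, the paper's $A + d\,e^{-A/m}$) and pick the same optimizer $\log(d/m)$. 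Your thresholding inequality is the cleaner packaging: it is a one-line deterministic bound that avoids the tail-integral representation and the sum-to-max reduction, and it makes the boundary case $m=d$ transparent (there $u^\star=0$ and both sides equal $d$). The R\'enyi-representation alternative you mention would yield the exact value $m(H_d - H_m + 1)$, but as you note it is unnecessary for the constants stated.
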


\begin{proof}
 Let us define $Y = \sum_{i=1}^m Z_i^*$.
 Then, as $Y$ is nonnegative, we have for any $A\ge 0$ that
 \[
 \begin{split}
  \EE{Y} =& \int_{0}^\infty \PP{Y>y} dy
  \\
  \le& A + \int_{A}^\infty \PP{\sum_{i=1}^m Z_i^*>y} dy
  \\
  \le& A + \int_{A}^\infty \PP{Z_1^*>\frac ym} dy  
  \\
  \le& A + d \int_{A}^\infty \PP{Z_1>\frac ym} dy
  \\
  =& A + d e^{-A/m}
  \\
  \le& m\log\pa{\frac dm} + m,
 \end{split}
 \]
 where in the last step, we used that $A = \log \pa{\frac dm}$ minimizes $A + d e^{-A/m}$ over the real line.
\end{proof}

\begin{lemma}\label{lem:modified-loss}
Fix any $\bv\in\Sw$ and any vectors $\bL\in \real^d$ and $\bloss\in[0,\infty)^d$. Define the
vector $\bloss'$ with components $\loss'_k = v_k\loss_k$. Then, for any perturbation vector $\bZ$ with independent
components,
\[
\begin{split}
&\PP{\bv\transpose\left(\bL + \bloss' - \bZ\right)\le \bu\transpose\left(\bL + \bloss' - \bZ\right)\,
\left(\forall
\bu\in\Sw\right)}
\\
&\qquad\le
\PP{\bv\transpose\left(\bL + \bloss - \bZ\right)\le \bu\transpose\left(\bL + \bloss - \bZ\right)\, \left(\forall
\bu\in\Sw\right)}.
\end{split}
\]
\end{lemma}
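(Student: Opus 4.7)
The plan is to prove this as a pointwise set-containment argument: I will show that the event appearing on the left-hand side is contained in the event appearing on the right-hand side for every realization of $\bZ$, which immediately gives the probability inequality regardless of the distribution of $\bZ$.

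The two algebraic observations driving the argument are as follows. First, since $\bv\in\Sw\subseteq\{0,1\}^d$, we have $v_k^2 = v_k$, and so $\bv\transpose\bloss' = \sum_k v_k^2 \loss_k = \sum_k v_k \loss_k = \bv\transpose\bloss$. In particular $\bv\transpose(\bL+\bloss-\bZ) = \bv\transpose(\bL+\bloss'-\bZ)$, meaning that replacing $\bloss'$ by $\bloss$ does not change the objective value at $\bv$. Second, for any other $\bu\in\Sw$, since $u_k\in\{0,1\}$, $v_k\in\{0,1\}$ and $\loss_k\ge 0$, we have $\bu\transpose(\bloss-\bloss') = \sum_k u_k(1-v_k)\loss_k \ge 0$, so replacing $\bloss'$ by $\bloss$ can only weakly increase the objective value at every competitor $\bu$.

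Combining these two observations, on any sample point where $\bv$ minimizes $\bu\transpose(\bL+\bloss'-\bZ)$ over $\bu\in\Sw$, we obtain for every $\bu\in\Sw$ the chain
\[
\bv\transpose(\bL+\bloss-\bZ) \;=\; \bv\transpose(\bL+\bloss'-\bZ) \;\le\; \bu\transpose(\bL+\bloss'-\bZ) \;\le\; \bu\transpose(\bL+\bloss-\bZ),
\]
which witnesses that $\bv$ also minimizes $\bu\transpose(\bL+\bloss-\bZ)$ over $\bu\in\Sw$. Thus the event on the left-hand side is contained in the event on the right-hand side, and the probability bound follows by monotonicity of $\PP{\cdot}$.

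There is no real obstacle here beyond spotting the identity $\bv\transpose\bloss = \bv\transpose\bloss'$ (which relies crucially on $\bv\in\{0,1\}^d$) and the componentwise inequality $\bloss \ge \bloss'$ (which relies on $\bloss\ge 0$). Independence of the coordinates of $\bZ$ is not used at all in the argument, and the statement in fact holds for arbitrary joint distributions of $\bZ$; this reflects the fact that the conclusion is really a deterministic combinatorial fact about $\argmin$ rather than a probabilistic one.
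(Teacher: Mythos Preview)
Your proof is correct and follows essentially the same approach as the paper: both arguments establish the pointwise event containment by using $\bv\transpose(\bloss-\bloss')=0$ and $\bu\transpose(\bloss-\bloss')\ge 0$ (the paper writes this via $\bloss''=\bloss-\bloss'$), then conclude the probability inequality by monotonicity. Your remark that independence of the components of $\bZ$ is unnecessary is correct and worth noting; the paper's proof likewise never invokes it.
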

\begin{proof}
Fix any $\bu \in \Sw\setminus\ev{\bv}$ and define the vector $\bloss'' = \bloss - \bloss'$.  Define the
events
\[
A'(\bu) = \ev{\bv\transpose\left(\bL + \bloss' - \bZ\right)\le \bu\transpose\left(\bL + \bloss' -
\bZ\right)}
\]
and
\[
A(\bu) = \ev{\bv\transpose\left(\bL + \bloss - \bZ\right)\le \bu\transpose\left(\bL + \bloss - \bZ\right)}.
\]
We have
\[
\begin{split}
A'(\bu) &= \ev{\left(\bv-\bu\right)\transpose\bZ \ge \left(\bv-\bu\right)\transpose\left(\bL+\bloss'\right)}
\\
&\subseteq \ev{\left(\bv-\bu\right)\transpose\bZ \ge
\left(\bv-\bu\right)\transpose\left(\bL+\bloss'\right) - \bu\transpose\bloss''}
\\
&= \ev{\left(\bv-\bu\right)\transpose\bZ \ge \left(\bv-\bu\right)\transpose\left(\bL+\bloss\right)}
= A(\bu),
\end{split}
\]
where we used $\bv\transpose\bloss''=0$ and $\bu\transpose\bloss''\ge 0$. Now, since $A'(\bu)\subseteq A(\bu)$, we have
$
\cap_{\bu\in\Sw} A'(\bu) \subseteq \cap_{\bu\in\Sw} A(\bu)
$,
thus proving
$
\PP{\cap_{\bu\in\Sw} A'(\bu)} \le \PP{\cap_{\bu\in\Sw} A(\bu)}
$
as claimed in the lemma.
\end{proof}

\begin{lemma}[cf.~Theorem~1 in \citet{BLLRS11}]\label{lem:mart}
Assume $X_1,X_2,\dots,X_T$ is a martingale-difference sequence with respect to the filtration $(\F_t)$ with
$X_t\le B$ for $1\le t \le T$. Let
$\sigma_t^2 = \var\left[\left.X_t\right|\mathcal{F}_{t-1}\right]$ and $\Sigma_t^2 = \sum_{s=1}^t \sigma_s^2.$
Then, for any $\delta>0$,
\[
\PP{\sum_{t=1}^T \bX_t > B \log \frac 1\delta + (e-2) \frac{\Sigma_T^2}{B}}\le
\delta.
\]
Furthermore, for any $S>B\sqrt{\log(1/\delta))(e-2)}$,
\[
\PP{\sum_{t=1}^T \bX_t > \sqrt{(e-2)\log\frac{1}{\delta}} \pa{\frac{\Sigma_T^2}{S} + S}}\le
\delta.
\]
\end{lemma}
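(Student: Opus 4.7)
The plan is to follow the classical exponential-martingale (Chernoff--Freedman) approach. The only nontrivial ingredient is the elementary inequality $e^y \le 1 + y + (e-2)y^2$, which I would establish for all $y\le 1$ by setting $f(y) = e^y - 1 - y - (e-2)y^2$ and noting that $f(0)=f(1)=0$, $f'(0)=0$, and $f(y)\to -\infty$ as $y\to -\infty$; a standard derivative analysis of $f$ and $f'$ then forces $f \le 0$ on $(-\infty, 1]$.

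Next, fix any $\lambda > 0$ with $\lambda B \le 1$. Applying this pointwise to $\lambda X_t \le \lambda B \le 1$, and using $\EEcc{X_t}{\F_{t-1}} = 0$, $\EEcc{X_t^2}{\F_{t-1}} = \sigma_t^2$, together with $1+z \le e^z$, I obtain
\[
\EEcc{\exp(\lambda X_t)}{\F_{t-1}} \le 1 + (e-2)\lambda^2 \sigma_t^2 \le \exp\pa{(e-2)\lambda^2 \sigma_t^2}.
\]
It then follows directly that
\[
M_t = \exp\pa{\lambda \sum_{s=1}^t X_s - (e-2)\lambda^2 \Sigma_t^2}
\]
is a nonnegative supermartingale with $M_0 = 1$, so $\EE{M_T}\le 1$. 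Markov's inequality then gives $\PP{M_T > 1/\delta}\le \delta$, which upon taking logarithms and rearranging yields the master tail bound
\[
\PP{\sum_{t=1}^T X_t > \frac{\log(1/\delta)}{\lambda} + (e-2)\lambda\, \Sigma_T^2} \le \delta,
\]
valid for every deterministic $\lambda \in (0, 1/B]$.

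Both stated inequalities follow by tuning $\lambda$ in this master bound. Setting $\lambda = 1/B$ (which satisfies $\lambda B \le 1$ automatically) recovers the first inequality immediately. For the second, setting $\lambda = \sqrt{\log(1/\delta)/(e-2)}/S$ balances the two terms on the right-hand side into the symmetric form $\sqrt{(e-2)\log(1/\delta)}\,\pa{\Sigma_T^2/S + S}$; the stated restriction on $S$ is exactly what is needed to preserve $\lambda B \le 1$ so that the moment bound applies.

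The main subtlety is verifying the cubic-free moment inequality in the full range $y\le 1$ rather than just $y \in [0,1]$: since only a one-sided bound $X_t\le B$ is assumed, we need $e^y \le 1 + y + (e-2)y^2$ to hold for arbitrarily negative $y$ as well. That is where I would spend most of the care; once the pointwise inequality is in hand, everything else is routine supermartingale bookkeeping followed by a pointwise choice of the free parameter $\lambda$.
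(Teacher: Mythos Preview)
The paper does not actually give its own proof of this lemma; it merely states it with a citation to Theorem~1 of \citet{BLLRS11}. Your argument---the classical Freedman exponential-supermartingale method, using the pointwise bound $e^y \le 1 + y + (e-2)y^2$ on $(-\infty,1]$, then Markov's inequality on $M_T$, then a deterministic choice of $\lambda$---is correct and is exactly how such one-sided Bernstein-type inequalities are established. One small remark: with your tuning $\lambda = \sqrt{\log(1/\delta)/(e-2)}\,/S$, the constraint $\lambda B \le 1$ reads $S \ge B\sqrt{\log(1/\delta)/(e-2)}$, whereas the paper's statement has the (weaker) condition $S > B\sqrt{(e-2)\log(1/\delta)}$; given the stray parenthesis in the paper's formula, this is almost certainly a typographical slip there, and the threshold your derivation produces is the correct one.
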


\end{document}